\theoremstyle{plain}
\newtheorem{theorem}{Theorem}[section]
\newtheorem{lemma}[theorem]{Lemma}
\theoremstyle{definition}
\newtheorem{example}{Example}
\newtheorem{definition}[theorem]{Definition}
\theoremstyle{remark}
\DeclareFontFamily{U}{tipa}{}
\DeclareFontShape{U}{tipa}{m}{n}{<->tipa10}{}
\newcommand{\arc@char}{{\usefont{U}{tipa}{m}{n}\symbol{62}}}%
\newcommand{\arc}[1]{\mathpalette\arc@arc{#1}}
\newcommand{\arc@arc}[2]{%
  \sbox0{$\m@th#1#2$}%
  \vbox{
    \hbox{\resizebox{\wd0}{\height}{\arc@char}}
    \nointerlineskip
    \box0
  }%
}
\newcommand{\argmin}{\mathop{\rm argmin}\limits}
\newcommand{\boldtheta}{{\boldsymbol{\theta}}}
\newcommand{\bolddelta}{{\boldsymbol{\delta}}}
\newcommand{\boldK}{{\boldsymbol{K}}}
\newcommand{\boldTheta}{{\boldsymbol{\Theta}}}
\newcommand{\boldf}{{\boldsymbol{f}}}
\newcommand{\boldGamma}{{\boldsymbol{\Gamma}}}
\newcommand{\boldbeta}{{\boldsymbol{\beta}}}
\newcommand{\mathbbR}{\mathbb{R}}
\newcommand{\boldh}{{\boldsymbol{h}}}
\newcommand{\boldx}{{\boldsymbol{x}}}
\newcommand{\boldg}{{\boldsymbol{g}}}
\newcommand{\boldw}{{\boldsymbol{w}}}
\newcommand{\boldphi}{{\boldsymbol{\phi}}}
\newcommand{\boldzero}{{\boldsymbol{0}}}
\newcommand{\boldX}{{\boldsymbol{X}}}
\newcommand{\boldy}{{\boldsymbol{y}}}
\newcommand{\boldI}{{\boldsymbol{I}}}
\newcommand{\mathbbE}{\mathbb{E}}
\newcommand{\vertiii}[1]{{\left\vert\kern-0.25ex\left\vert\kern-0.25ex\left\vert #1 
    \right\vert\kern-0.25ex\right\vert\kern-0.25ex\right\vert}}
\newcommand{\bh}{\boldsymbol{h}}
\newcommand{\bw}{\boldsymbol{w}}
\title{Guiding Time-Varying Generative Models with Natural Gradients \\ on Exponential Family Manifold}
\author[1]{\href{mailto:song.liu@bristol.ac.uk}{Song Liu}}
\author[2]{\href{mailto:leyang.wang.24@ucl.ac.uk}{Leyang Wang}}
\author[1]{\href{mailto:yakun.wang@bristol.ac.uk}{Yakun Wang}}
\affil[1]{%
    School of Mathematics\\
    University of Bristol\\
    United Kingdom
}
\affil[2]{%
    Department of Computer Science\\
    University College London\\
    United Kingdom
}
\begin{document}
\maketitle

\begin{abstract}
Optimising probabilistic models is a well-studied field in statistics. However, its connection with the training of generative models 
remains largely under-explored. In this paper, we show that the evolution of time-varying generative models can be projected onto an exponential family manifold, naturally creating a link between the parameters of a generative model and those of a probabilistic model. We then train the generative model by moving its projection on the manifold according to the natural gradient descent scheme. 
This approach also allows us to efficiently approximate the natural gradient of the KL divergence without relying on MCMC for intractable models.  Furthermore, we propose particle versions of the algorithm, which feature closed-form update rules for any parametric model within the exponential family. Through toy and real-world experiments, we validate the effectiveness of the proposed algorithms. The code of the proposed algorithms can be found at \url{https://github.com/anewgithubname/iNGD}.

\end{abstract}

\section{Introduction}
Modern generative models \citep{goodfellow2014generative,ho2020denoising,song2021scorebased} have become indispensable tools in machine learning, achieving remarkable success in applications \citep{rombach2022high,gu2022vector,li2019neural,tan2024naturalspeech}.
These generative models are neural networks that transform a latent variable into a higher-dimensional sample. They overcome \emph{classic restrictions imposed on probability density models}, such as positivity, normalisation, and encoding of conditional independence via factorisation; thus, they can be designed freely to capture complex, intricate patterns from the high-dimensional data. 
Although these models produce highly realistic outputs, they can be hard to train. Training them requires massive amounts of data and maintains a delicate balance between ``generators'' and ``discriminators'' \citep{goodfellow2014generative,arjovsky2017wasserstein,wang2022diffusion} or building effective  ``bridges'' between the reference dataset and target dataset \citep{song2021scorebased,lipman2023flow,liu2023flow,bortoli2021diffusion} which can be difficult to design. 




Parametric probabilistic models, particularly those in the 
exponential family \citep{casella2024statistical, wainwright2008graphical}, 
play a central role in modern statistical inference, and have
well-established theoretical framework and training algorithms. 
These models, characterised by their sufficient statistics and natural parameters, define a \emph{probability distribution manifold}, 
the geometric structure of which inspired efficient optimisation methods such as natural gradient descent (NGD) \cite{amari1998natural}, ensuring stable and efficient parameter estimation \citep{amari2000methods}. However, while exponential family models are versatile \emph{in theory}, their use may be limited in practice: the hand-crafted sufficient statistic may fail to capture complex relationships in data; more flexible sufficient statistics (such as neural nets) result in intractable likelihoods. Thus, parameter estimation that requires a likelihood, such as NGD, cannot be easily applied to fit the model. 

We aim to unlock the power of modern generative models through the principled training of probabilistic models.

Our work is inspired by two distinct research directions developed in recent years: time-varying generative models \citep{ho2020denoising,song2021scorebased,liu2023flow,lipman2023flow} and Time Score Matching (TSM) \citep{choi2022density}.
Time-varying generative models generate samples progressively, evolving them over time until they match the target distribution. Meanwhile, TSM learns the instantaneous change of a time-varying distribution from data. Recent work demonstrates that TSM can ``project'' temporal variations of a dataset onto the parameter space of exponential family distributions \citep{williams2024high}.


The core idea of this paper is to evolve a generative model such that its \emph{projected trajectory} on an exponential family manifold aligns with the trajectory induced by NGD. 
This way, we get the expressiveness of a modern generative model, and the training efficiency of NGD. The exponential family model acts as a guiding framework for the generative model throughout the training process.
An illustration of this idea is in \cref{fig.notions}. 
We align the projected changes of the generative model with the NGD update on a parametric manifold (shrinking the length of the red dotted line in \cref{fig.notions}). 

Although our technique applies to all time-varying generative models, we focus on drift-based generative models, where samples are iteratively perturbed by vector-valued functions. We develop two NGD-guided drift-based generative approaches: kernel NGD and neural tangent kernel NGD, both of which admit closed-form sample updates.



\begin{figure}
    \centering
    \includegraphics[width=.45\textwidth]{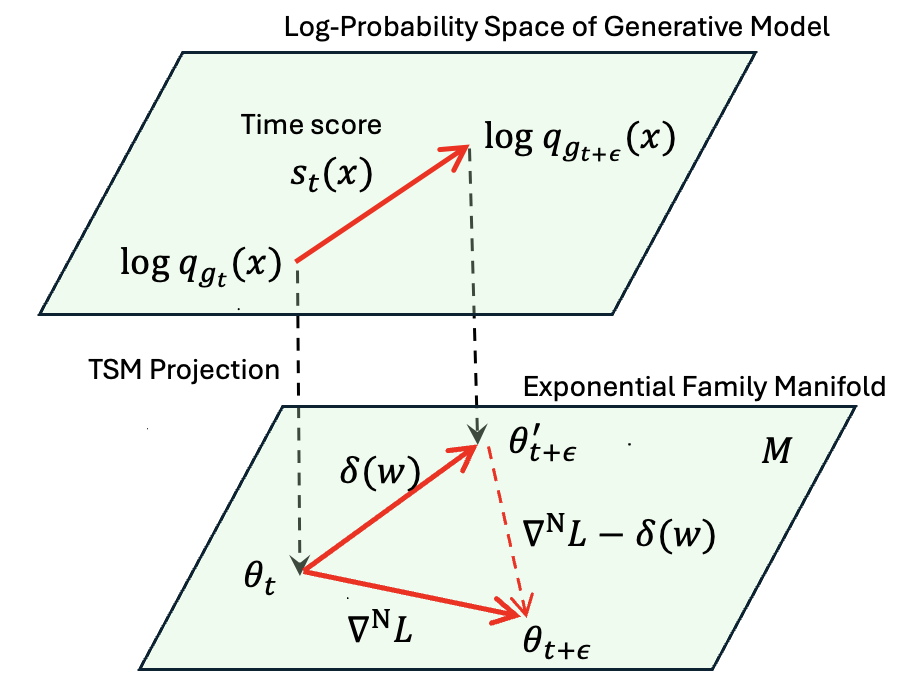}
    \caption{Technical notions illustrated. Matching the projected generative model change $\bolddelta(\boldw)$ to the NGD meaning reducing the length of the red dotted line. Symbols are defined in Section \ref{sec.background} and \ref{sec.prop}. }
    \label{fig.notions}
\end{figure}

\section{Background}
\label{sec.background}
In recent years, it has been recognised that there are many similarities between sampling and optimisation \citep{wibisono2018sampling,arbel2019maximum,sinho2024logconcave,cheng2018underdamped,he2024training}. Encouraged by these results, we ask: 
Given a time-varying generative model \( g_t \), where \( t \) is time, can we move its output distribution \( q_{g_t} \) toward an optimal distribution \( q^* \) using NGD?  In literature, generative models are sometimes referred to as implicit models and generative distribution \( q_{g_t} \) may not have a parametric density\footnote{Some generative models, such as normalizing flows \citep{rezende2015variational}, neural ODEs \citep{chen2018neural}, admit explicit parametric forms. }, thus optimisation designed for parametric densities cannot be directly applied. 


Before stating our solutions to this problem, we first introduce a few ideas that are essential for developing the proposed algorithm. 

\textbf{Notations:} 
Vectors (\(\boldx\)) and matrices (\(\boldX\)) are denoted in bold. $\mathrm{dim}(\boldx)$ denotes the dimension of vector $\boldx$. 
Random variables ($X$) are non-bold, capital letters. 
Euclidean norms are denoted as \(\|\cdot\|\), and for elements in a Hilbert space, the Hilbert norm is written as \(\|\cdot\|_{\mathcal{H}}\). 
Expectations and covariances under \(q\) are denoted as \(\mathbb{E}_{q}[\cdot]\) and \(\mathrm{Cov}_{q}[\cdot]\), respectively. 
$\nabla f(\boldx) = [\partial_1 f(\boldx), \partial_2 f(\boldx), \cdots]^\top$. 
$\nabla f(\boldx_0)$ means the gradient of $f$ evaluated at $\boldx_0$. Suppose $\boldf: \mathbbR^m \to \mathbbR^n, $
$\nabla \boldf(\boldx)$ denotes the Jacobian of $\boldf$ and is a matrix of size $n \times m$. 

\subsection{Time-varying Exponential Family}
\begin{definition}(See, e.g., Section 3.4, \citet{casella2024statistical})
\label{def.exp.fam}
A distribution is a member of the \textbf{exponential family} with the sufficient statistic \(T\) if and only if its density function \(q(\boldx; \boldtheta)\) can be expressed as:
\begin{align*}
    q(\boldx; \boldtheta) := \exp\big(\langle \boldtheta, T(\boldx) \rangle - A(\boldtheta)\big),
\end{align*}
where \(\boldtheta \in \mathbbR^d\) is the natural parameter, and \(A(\boldtheta)\) is the log-normalisation function, defined as:
\begin{align*}
    A(\boldtheta) := \log \int \exp\big(\langle \boldtheta, T(\boldx) \rangle\big) \, d\boldx.
\end{align*}
\end{definition}
This family includes many known distributions, such as Gaussian, Gamma and Exponential distributions. We can also choose $T$ to be infinite-dimensional to enable more flexible modelling \citep{sriperumbudur2017density,arbel2018kernel}. 

\begin{definition}
The parametric manifold of the exponential family distributions with a sufficient statistic $T$ is:
\begin{align*}
    \mathcal{M}(T) := \left\{\boldtheta \in \mathbb{R}^d \;\middle|\; \int q(\boldx; \boldtheta) \, d\boldx = 1 \right\},
\end{align*}
where $q(\boldx; \boldtheta)$ depends on $T$ as in Definition \ref{def.exp.fam}. 
\end{definition}

Although $\mathcal{M}(T)$ is a parametric manifold, with a slight abuse of the notation, we denote $q_\boldtheta \in \mathcal{M}(T)$ to indicate that $q_\boldtheta$ is a member of exponential family. 

\textbf{Time-varying exponential family distributions} refers to exponential family distributions whose natural parameter is a continuous function of time, i.e., \(q_{\boldtheta_t} = q(\boldx; \boldtheta(t))\). 
Moreover, 
the time derivative of \(\log q_t\) is:
\begin{align*}
    \partial_t \log q_{\boldtheta_t} = \langle \partial_t \boldtheta(t), T(\boldx) \rangle - \partial_t A(\boldtheta(t)).
\end{align*}

Equivalently, this can be expressed as the inner product of the natural parameter’s rate of change and the centred sufficient statistic (Proposition 3.1 in \citet{williams2024high}):
\begin{align}
\label{eq:partial_t.logqt}
    \partial_t \log q_{\boldtheta_t} = \langle \partial_t \boldtheta(t), T(\boldx) - \mathbb{E}_{q_{\boldtheta_t}}[T(\boldx)] \rangle.
\end{align}
By definition, the time-varying process $\boldtheta(t)$ is a curve on $\mathcal{M}(T)$ and $\partial_t \boldtheta(t)$ is the tangent vector of such curve. 

\subsection{Natural Gradient Descent}
The natural gradient descent is an effective optimisation technique for parametric probabilistic models  \citep{amari1998natural,amari2000methods} and has been widely applied in various fields \citep{bernacchia2018exact,chenteng}.
\begin{definition}
    The natural gradient of a loss function $\mathcal{L}(\boldtheta)$ is the gradient of the loss function scaled by the inverse Fisher information matrix $\mathcal{F}$. 
    \begin{align*}
        \nabla^N \mathcal{L}(\boldtheta) := \mathcal{F}^{-1} \nabla \mathcal{L}(\boldtheta).
    \end{align*}
    Suppose $q_\boldtheta \in \mathcal{M}(T)$, the Fisher information matrix is 
    \[
    \mathcal{F} = \nabla^2_\boldtheta \log q(\boldx; \boldtheta) = \mathrm{Cov}_{q_{\boldtheta}}[T(\boldx)],
    \]
    describing the curvature of $\mathcal{M}(T)$ neighbouring $\boldtheta$.
\end{definition}
Although methods described in this paper are generic for different $\mathcal{L}$, we focus on the KL divergence in this paper.

\begin{example}
    Suppose $\mathcal{L}$ is the KL divergence from $p$ to $q$.  
    \begin{align*}
        \mathcal{L} = \mathrm{KL}[p, q] = \mathbb{E}_{p}[\log p(\boldx) - \log q(\boldx)]. 
    \end{align*}
    Suppose $q$ contains parameters $\boldtheta$ and $q_\boldtheta \in \mathcal{M}(T)$, 
    then the natural gradient of $\mathcal{L}$ is given by
    \begin{align}
        \label{eq.ngd.kl}
        \nabla^N {\mathrm{KL}} &:= - \mathcal{F}^{-1} \mathbb{E}_{p}[\nabla_{\boldtheta} \log q(\boldx; \boldtheta)] \notag \\
        &= - \mathrm{Cov}_{q_{\boldtheta}}[T(\boldx)]^{-1} \left\{\mathbb{E}_{p}[T(\boldx)] - \mathbb{E}_{q_{\boldtheta}}[T(\boldx)]\right\}. 
    \end{align}
\end{example}
Except for certain specific choices of \( T \), \( \nabla^N {\mathrm{KL}} \) does not admit a closed-form expression, as neither \( \mathbb{E}_{q_{\boldtheta}}[T(\boldx)] \) nor \( \mathrm{Cov}_{q_{\boldtheta}}[T(\boldx)] \) can be expressed in closed form for a general $T$. If we could sample from \( q_{\boldtheta} \), these expectations and covariances could be approximated using Monte Carlo methods. However, generating samples from a complex distribution \( q_{\boldtheta} \) itself remains a challenging problem.  

\subsection{Time-varying Generative Model}
In recent years, there has been a growing trend of designing generative models as functions of time, in contrast to the classic generative models where the sample generating mechanism is independent of time. 
For example, the diffusion generative model \citep{song2021scorebased} can be interpreted as the solution to a Stochastic Differential Equation at time \( t \),  
with initial samples drawn from a reference distribution.  
Similarly, the rectified flow generative model \citep{liu2023flow} can be viewed as the solution to an Ordinary Differential Equation at time \( t \).
\begin{definition}
    A \textbf{generative model} is a data-generating mechanism defined as \(X = \boldg(Z; \boldw)\), where \( Z \) is a random variable sampled from a latent distribution \( p_Z \). \(\boldw \in \mathcal{W}\) are parameters of the generative model $\boldg$. 
    A \textbf{time-varying generative model} is  defined as \( X_t = \boldg(Z, t; \boldw) \), i.e., a generative model whose output depends explicitly on $t$. 
\end{definition}

\subsection{Time Score Matching}
In applications like time-series analysis, characterising the change of the data generative distribution is an essential task, and recently, a method measures the change of distribution over time was proposed \citep{choi2022density}. 

\begin{definition}
\textbf{Time score} is the time derivative of the log density of a time-varying distribution. Given a time-varying distribution $q_t$, its time score is $s_t := \partial_t (\log q_t)$. 
\end{definition}


Given a parametric time score model \({v}(\boldx;t)\) and a time-varying sample $X_t \sim q_t$, the time score can be learned by 
the \textbf{Time Score Matching (TSM)} which minimises the objective:
$
    \int 
    \mathbb{E}\Bigl[
        \lambda(t)\,\bigl\|
        s_t(X_t)
        -
        {v} \bigl(X_t;t\bigr)
        \bigr\|^2
    \Bigr]\,
    \mathrm{d}t,
$
where \(\lambda(t)\) is a weighting function. 


A recent work shows, 
for the special case where \({q_t}(\mathbf{x}) \in \mathcal{M}(T)\), the time differential natural parameter \(\partial_t\boldsymbol{\theta}\) can directly learned by TSM \citep{williams2024high} and applied it to learning time-varying graphical models. 




\section{Proposed Algorithm: Evolution Projection}
\label{sec.prop}
\subsection{Problem Formulation}
Suppose we only have access to a target distribution \( p \) through samples \( Y \sim p \) and a latent variable $Z$. Our goal is to find a time-varying generative model \( g_t \) that progressively approximates the target distribution as \( t \to \infty \). Informally speaking, we seek a model such that \( Y \overset{d}{\approx}  g_\infty(Z) \).

We want to avoid training generative models using GANs or diffusion models, as these models require a large number of samples and are hard to train. Instead, given a loss function \(\mathcal{L}(p, q_{\boldtheta_t})\) that measures the difference between \(p\) and a time-varying probabilistic model \(q_{\boldtheta_t}\), we aim to guide the training of the generative model by minimising the loss \(\mathcal{L}\) of the probabilistic model \(q_{\boldtheta_t}\) over time.

\textbf{The key idea} of this paper is to align the evolution of the generative model with \(\nabla^N \mathcal{L}(\boldtheta)\) on the manifold \(\mathcal{M}(T)\). 
Consequently, minimising the loss for \( q_\boldtheta \) simultaneously drives the generative model toward matching \( p \).


To achieve this, we first need to establish a direct correspondence between the instantaneous change in the generative model and the parametric update on \(\mathcal{M}(T)\).




\subsection{Projecting the Change}
\label{sec.proj.change}

Denote the sample of a time-varying generative model $g_t$ as $X_t \sim q_{g_t}$. We can measure the instantaneous change of the generative model via the its time score $s_t := \partial_t (\log q_{g_t})$. 

At a fixed time $t_0$, we can ``project'' the time score $s_t$ 
onto the manifold $\mathcal{M}(T)$
by minimising the squared difference between $s_t$ and the time score of an exponential family distribution, $\partial_t (\log q_{\boldtheta_t}), q_{\boldtheta_t} \in \mathcal{M}(T)$, 
\begin{align}
    \label{eq.tsm}
    &\int \lambda_{t_0}(t) \mathbbE \left[\left(s_t(X_t) - \partial_t (\log q_{\boldtheta_t})(X_t) \right)^2\right] \mathrm{d}t, \notag \\
    = &\int \lambda_{t_0}(t) \mathbbE \left[\left(s_t(X_t) - \langle \partial_t \boldtheta(t), T(X_t) - \mathbbE [T(X_t')] \rangle\right)^2\right] \mathrm{d}t, 
\end{align}
where $\lambda_{t_0} = \exp(-(t-t_0)^2/\sigma^2)$ and $X_t'$ is an independent copy of $X_t$. $\sigma$ is a hyperparameter fixed in advance. The second line is due to \cref{eq:partial_t.logqt}. The integration is over the entire real line. 

Introducing a linear-in-time model $\boldtheta(t; \bolddelta) = t \bolddelta$, the above objective becomes 
\begin{align}
    \label{eq.locallinear.obj}
    J(\bolddelta) := \int \lambda_{t_0}(t) \mathbbE \left[\left(s_t(X_t) - \langle \bolddelta, T(X_t) - \mathbbE [T(X_t')] \rangle\right)^2\right] \mathrm{d}t, 
\end{align}
which depends on the parameter $\bolddelta$. 
We use linear-in-time model to simplify the optimisation problem. More model choices of $\boldtheta(t)$ could be found in \citep{williams2024high}. 
\eqref{eq.locallinear.obj} can be viewed as a local regression at the fixed time point $t_0$:  $\lambda$ is a time smoothing kernel, \cref{eq.locallinear.obj} finds the best score model that approximates  $s_t(\boldx)$ at $t_0$. 

The minimiser to the above objective is a vector in $\mathbbR^d$ that best describes the instantaneous change in the generative model. Moreover, we can show that 
\begin{theorem}
\label{thm:delta3.1}
    Let $\bolddelta_{t_0} := \argmin J(\bolddelta)$, then 
    \begin{align*}
        &\bolddelta_{t_0} =\nonumber \\&  - \left(\int \lambda_{t_0}(t) \mathrm{Cov}[T(X_t)]
        \mathrm{d}t\right)^{-1} \int \partial_t \lambda_{t_0}(t) \mathbbE \left[ T(X_t) \right] \mathrm{d} t. 
    \end{align*} 
\end{theorem}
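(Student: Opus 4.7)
The plan is to recognize that $J(\bolddelta)$ is a quadratic form in $\bolddelta$, so its unique minimizer is characterized by the first-order (normal) equations. Setting $c_t := \mathbb{E}[T(X_t')]$ and $\tilde T_t := T(X_t) - c_t$, the first step is to differentiate under the integral sign and the expectation to obtain
\begin{align*}
\nabla_\bolddelta J(\bolddelta) = -2\int \lambda_{t_0}(t)\, \mathbb{E}\bigl[(s_t(X_t) - \langle \bolddelta, \tilde T_t\rangle)\, \tilde T_t\bigr]\, \mathrm{d}t,
\end{align*}
and then rearrange the vanishing-gradient condition into the normal equation $\bA\, \bolddelta_{t_0} = \bb$, where $\bA := \int \lambda_{t_0}(t)\, \mathbb{E}[\tilde T_t \tilde T_t^\top]\, \mathrm{d}t$ and $\bb := \int \lambda_{t_0}(t)\, \mathbb{E}[s_t(X_t)\, \tilde T_t]\, \mathrm{d}t$.

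The left-hand matrix $\bA$ is immediate: since $X_t'$ is an independent copy of $X_t$, the centering constant $c_t$ equals $\mathbb{E}[T(X_t)]$, so $\mathbb{E}[\tilde T_t \tilde T_t^\top] = \mathrm{Cov}[T(X_t)]$. This gives exactly the matrix appearing in the theorem.

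The main work is to rewrite $\bb$ in the stated form. I would first split $\mathbb{E}[s_t(X_t)\, \tilde T_t] = \mathbb{E}[s_t(X_t)\, T(X_t)] - c_t\, \mathbb{E}[s_t(X_t)]$ and observe that the second summand vanishes: since $q_{g_t}$ integrates to one, $\mathbb{E}[s_t(X_t)] = \int \partial_t q_{g_t}(\boldx)\, \mathrm{d}\boldx = \partial_t 1 = 0$. For the remaining term I would use the identity $s_t(\boldx)\, q_{g_t}(\boldx) = \partial_t q_{g_t}(\boldx)$ to write
\begin{align*}
\int \lambda_{t_0}(t)\, \mathbb{E}[s_t(X_t)\, T(X_t)]\, \mathrm{d}t = \int T(\boldx) \left(\int \lambda_{t_0}(t)\, \partial_t q_{g_t}(\boldx)\, \mathrm{d}t\right) \mathrm{d}\boldx,
\end{align*}
after swapping the order of integration via Fubini. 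Integration by parts in $t$ on the inner integral, noting that $\lambda_{t_0}(t) = \exp(-(t-t_0)^2/\sigma^2)$ decays to zero as $t \to \pm\infty$ so the boundary term drops out, converts $\partial_t q_{g_t}$ into $-\partial_t \lambda_{t_0}(t)\, q_{g_t}$. Swapping the integration order back then yields $\bb = -\int \partial_t \lambda_{t_0}(t)\, \mathbb{E}[T(X_t)]\, \mathrm{d}t$, and inverting $\bA$ gives the claimed expression.

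The main obstacle is the integration-by-parts step: it requires tacit regularity assumptions (differentiability of $t \mapsto q_{g_t}(\boldx)$, integrability of $|T(\boldx)\, \partial_t q_{g_t}(\boldx)|$, and sufficient decay of $\lambda_{t_0}(t)\, q_{g_t}(\boldx)$ as $|t| \to \infty$) to justify both Fubini and the vanishing of the boundary term. Beyond that, one should also note that $\bA$ is positive definite (as a $\lambda_{t_0}$-weighted integral of covariance matrices) whenever $T$ has no deterministic linear component along the trajectory, ensuring $\bolddelta_{t_0}$ is the unique minimizer rather than merely a stationary point.
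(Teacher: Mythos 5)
Your proof is correct and reaches the paper's formula by the same overall route: treat $J(\bolddelta)$ as a quadratic in $\bolddelta$, write down the normal equations, and identify the coefficient matrix with $\int \lambda_{t_0}(t)\,\mathrm{Cov}[T(X_t)]\,\mathrm{d}t$. The one substantive difference is how the linear term is handled. The paper simply invokes Theorem 4.1 of Williams et al.\ (2024), which asserts that the cross term involving the unknown time score $s_t$ can be replaced by $2\int \partial_t\lambda_{t_0}(t)\,\mathbbE[\langle\bolddelta, T(X_t)\rangle]\,\mathrm{d}t$; you instead derive this rewrite from scratch, using $s_t(\boldx)\,q_{g_t}(\boldx)=\partial_t q_{g_t}(\boldx)$, Fubini, and integration by parts in $t$ with the Gaussian weight killing the boundary term (plus the observation that $\mathbbE[s_t(X_t)]=0$ makes the centering constant irrelevant in the linear term). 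This is exactly the mechanism behind the cited result, so your version buys a self-contained proof at the cost of the regularity caveats you correctly flag (differentiability of $t\mapsto q_{g_t}$, integrability, decay); the paper's version is shorter but leans on the external lemma. Your closing remark on positive definiteness of the coefficient matrix matches the paper's invertibility caveat. No gaps.
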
 
The proof can be found in \cref{app:3.1}. 
Since $X_t = g(Z, t; \boldw)$, we can express $\bolddelta_{t_0}$ as a function of $\boldw$ using the reparameterization trick \citep{Kingma2014}. 
\begin{align}
\label{eq.deltasol}
    \bolddelta_{t_0}(\boldw) &=  - C^{-1} \int \partial_t \lambda_{t_0}(t) \mathbbE \left[ T(g(Z, t; \boldw)) \right]\mathrm{d}t, \notag \\
    C &= \int \lambda_{t_0}(t) \mathrm{Cov}[T(g(Z, t; \boldw))]
    \mathrm{d}t.  
\end{align}
This expression allows us to approximate $\mathbbE[\cdot]$ and $\mathrm{Cov}[\cdot]$ using samples of $Z$. 
We illustrate 
the projection process in Figure \ref{fig.notions} where the downward dotted arrow represents the projection by TSM. 


\paragraph{Remarks:}
Different time scores can map to the same projection $\bolddelta_{t_0}$, particularly if the sufficient statistic $T$ is restrictive. 
However, if $T$ is chosen to make the exponential family is expressive enough, we expect that such information collapse can be avoided. The rigorous proof of this claim is left as a future work. On a positive note, TSM objective $\eqref{eq.tsm}$ involves no normalising terms, consequently, we are free to use any expressive choice of $T$ (e.g. neural networks, or RKHS kernels \citep{sriperumbudur2017density}).  

The hyperparameter $\sigma$ in \cref{eq.tsm} will introduce additional biases to the estimation, similar to how a non-zero kernel bandwidth in local regression introduces biases to the estimate. In experiments, we observe that reasonable $\sigma$ choices (e.g., $0.1$) work well. Moreover, in \cref{sec.ngd.drift}, we show how to obtain an unbiased estimator for a special type of time-varying generative models.  




\subsection{Matching to NGD}
From \cref{eq.deltasol}, 
we can see the projection $\bolddelta_{t_0}(\boldw)$ creates a connection between the parameter evolution of a generative model and those of a probabilistic model. 
The key idea of this paper is 
to align the evolution of the generative model with that of the probabilistic model. 
In particular, we 
match $\bolddelta_{t_0}(\boldw)$ to the NGD update using the following objective 
\begin{align}
    \label{eq.align}
    \boldw_{t_0} = \argmin_{\boldw \in \mathcal{W}} \| \nabla^N \mathcal{L}(\boldtheta_{t_0}) - \bolddelta_{t_0}(\boldw) \|^2. 
\end{align}
In words, we find the generative model parameter $\boldw$ that results in the projected update $\bolddelta_{t_0}(\boldw)$ closest to the natural gradient update of the loss function. 

In the previous section, we have seen that 
$\bolddelta_{t_0}(\boldw)$ can be approximated using samples $Z$.
Assume that at the starting point, the generator $g(Z, 0; \boldw_{0})$ produces an output distribution $q_{g_0} \in \mathcal{M}(T)$, and the trajectory $q_{g_t}$ is precisely traced by the projected updates, we can expect that the samples generated from $g(Z, t; \boldw_{t})$ will be close to the samples from $q_{\boldtheta_t}$. Thus, we approximate $\nabla^N \mathcal{L}(\boldtheta)$ using samples from $g(Z, t; \boldw_{t})$.


After solving \eqref{eq.align}, instead of taking a natural gradient step, 
we directly sample from $g(Z, t_0 + \epsilon; \boldw_{t_0})$ using a small $\epsilon > 0$. 
Since we have already aligned the projected change of the generative model with the natural gradient step, we can expect that the samples generated from $g(Z, t_0 + \epsilon; \boldw_{t_0})$ will be close to the samples from $q(\boldx; \boldtheta_{t_0} + \epsilon \mathcal{F}_{t_0}^{-1}\nabla \mathcal{L}(\boldtheta_{t_0}))$ by actually taking a natural gradient step with step size $\epsilon$. 
We summarize the entire algorithm in \cref{alg:simple} and name it implicit NGD (iNGD). 

In \cref{fig:illu}, we show an example of the iNGD and compare it with the actual NGD on a Gaussian manifold $T(\boldx) = [\boldx, \boldx\boldx^\top ], \boldx \in \mathbbR^2$. In this example, the generative model is an MLP with one hidden layer consisting of 67 neurons. We can see that the samples generated from iNGD accurately retrace the steps of the classic NGD, ultimately producing a set of samples (red dots) that resemble the target distribution (black dotted line). 

\begin{figure}
    \centering
    \includegraphics[width=.23\textwidth]{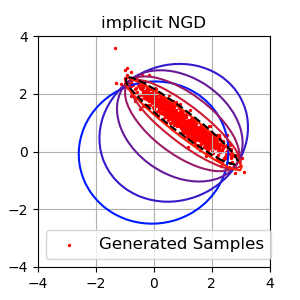}
    \includegraphics[width=.23\textwidth]{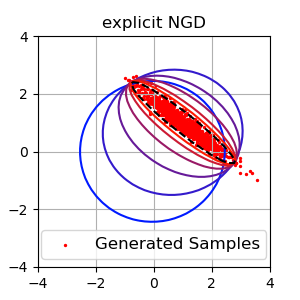}
    \caption{The evolution of parametric distributions under iNGD and NGD on the manifold of Gaussian distributions, i.e., $\mathcal{M}([\boldx, \boldx^\top \boldx])$, starting from $\mathcal{N}(\boldzero,\boldI)$. Each ellipse represents the 95\% confidence interval of $q_{\boldtheta_t}$. As $t$ increases, the ellipse turns from blue to red.   Black dotted line marks the confidence interval the target distribution $p$. For iNGD, the ellipses are approximated by fitting a Gaussian model to the generated samples.
    }
    \label{fig:illu}
\end{figure}

\section{Special Case: Drift Model}
\label{sec.ngd.drift}

One popular class of generative models is the \emph{drift-based generative model}. This model iteratively perturbs samples using a vector-valued function until convergence.

\begin{definition}
    \label{ex.drift.model}
    At a fixed time $t_0$, 
    suppose we have samples $X_{t_0}$, 
    a drift generative model generates samples at a new time point $t$ via the following scheme: 
    \[
        \boldg(X_{t_0}, t; \boldw) = X_{t_0} + (t-t_0) \boldh(X_{t_0}; \boldw). 
    \] 
\end{definition}
This model could be seen as a local linear model and the amount of update $\boldh(X_{t_0}; \boldw)$ depends linearly on $t - t_0$. 
The input of the model is a sample at the time point $t_0$. 
To generate samples, we need to draw a batch of samples from an initial distribution $q_{g_0}$ and then successively apply the drift generative model for a sequence of $t$ until convergence.   
An Euler solver of flow-based generative model is an example of a drift generative model, in which case, $t-t_0$ is the step size of the Euler solver.  


We can prove that when using a model introduced in \cref{ex.drift.model}, \cref{eq.deltasol} has a limiting solution as $\sigma \to 0$, which eliminates the bias caused by the smoothing kernel $\lambda$.
\begin{theorem}
\label{thm.cf}
    The projection of the time score $s_t$ of a drift generative model has a closed form expression at the limit of $\sigma \to 0$, i.e., 
    \begin{align*}
    \lim_{\sigma \to 0} \bolddelta_{t_0}(\boldw) &= \mathrm{Cov}[T(X_{t_0})]^{-1} \mathbbE \left[ \nabla T(X_{t_0})  \boldh(X_{t_0}; \boldw) \right]\\
    &= \mathcal{F}_{t_0}^{-1} \mathbbE \left[ \nabla T(X_{t_0})  \boldh(X_{t_0} ; \boldw)\right]. 
\end{align*}
\end{theorem}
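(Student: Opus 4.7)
The plan is to start from the closed-form expression for $\bolddelta_{t_0}(\boldw)$ given in \cref{eq.deltasol} and show that the $\sigma$ factors in the numerator and denominator cancel in the limit. The key observation is that while $\lambda_{t_0}(t) = \exp(-(t-t_0)^2/\sigma^2)$ integrates to $\sigma\sqrt{\pi}$ and vanishes as $\sigma \to 0$, so does $\int \partial_t \lambda_{t_0}(t) \mathbbE[T(X_t)]\,\mathrm{d}t$ at the same rate, so the ratio remains finite.

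First I would rewrite the numerator of \cref{eq.deltasol} using integration by parts. Since $\lambda_{t_0}$ decays like a Gaussian at $\pm\infty$, the boundary terms vanish (assuming polynomial growth of $\mathbbE[T(X_t)]$, which we will assume as a mild regularity condition), so
\begin{align*}
    \int \partial_t \lambda_{t_0}(t)\,\mathbbE[T(X_t)]\,\mathrm{d}t = -\int \lambda_{t_0}(t)\,\partial_t \mathbbE[T(X_t)]\,\mathrm{d}t.
\end{align*}
The next step specializes to the drift model. Since $X_t = X_{t_0} + (t-t_0)\boldh(X_{t_0};\boldw)$, the chain rule gives $\partial_t T(X_t) = \nabla T(X_t)\,\boldh(X_{t_0};\boldw)$, and under mild interchange-of-limits conditions,
\begin{align*}
    \partial_t \mathbbE[T(X_t)] = \mathbbE\!\left[\nabla T(X_t)\,\boldh(X_{t_0};\boldw)\right].
\end{align*}

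Next I would analyse the $\sigma \to 0$ limit of the ratio. The normalized weight $\lambda_{t_0}(t)/(\sigma\sqrt{\pi})$ converges to a Dirac mass at $t_0$, so dividing the numerator and denominator of \cref{eq.deltasol} by $\sigma\sqrt{\pi}$ and applying dominated convergence (using continuity of $t\mapsto \mathbbE[\nabla T(X_t)\boldh(X_{t_0};\boldw)]$ and $t\mapsto\mathrm{Cov}[T(X_t)]$ at $t_0$, plus the fact that $X_t \to X_{t_0}$ pointwise as $t\to t_0$) yields
\begin{align*}
    \tfrac{1}{\sigma\sqrt{\pi}}\int \lambda_{t_0}(t)\,\mathrm{Cov}[T(X_t)]\,\mathrm{d}t &\longrightarrow \mathrm{Cov}[T(X_{t_0})],\\
    -\tfrac{1}{\sigma\sqrt{\pi}}\int \lambda_{t_0}(t)\,\mathbbE[\nabla T(X_t)\boldh(X_{t_0};\boldw)]\,\mathrm{d}t &\longrightarrow -\mathbbE\!\left[\nabla T(X_{t_0})\boldh(X_{t_0};\boldw)\right].
\end{align*}
Substituting back into \cref{eq.deltasol} and using the minus sign out front produces the first equality of the theorem. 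The second equality is immediate from the identity $\mathcal{F}_{t_0} = \mathrm{Cov}_{q_{\boldtheta_{t_0}}}[T(\boldx)]$ introduced in the Fisher information definition, together with the assumption that at time $t_0$ the generator output distribution matches the exponential family member, so that expectations under $X_{t_0}$ agree with expectations under $q_{\boldtheta_{t_0}}$.

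The main obstacle I anticipate is the rigorous justification of the two limit passages (the integration-by-parts boundary terms and the convergence of the two integrals divided by $\sigma\sqrt{\pi}$). Both require moment/continuity assumptions on $T$ and $\boldh$ that the paper does not state explicitly, so the cleanest presentation is to impose them as regularity conditions (e.g., $T$ is $C^1$ with $\nabla T$ bounded along trajectories, and $\boldh$ is integrable) and then invoke dominated convergence. The algebraic cancellation of $\sigma\sqrt{\pi}$ itself is straightforward; the technical care lies entirely in interchanging limits, differentiation under the expectation, and the integration by parts.
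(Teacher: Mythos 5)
Your proof is correct and arrives at the same two limits (numerator $\to -\mathbbE[\nabla T(X_{t_0})\boldh(X_{t_0};\boldw)]$, denominator $\to \mathrm{Cov}[T(X_{t_0})]$), but the route differs from the paper's in a way worth noting. The paper first Taylor-expands $t \mapsto \mathbbE[T(g(Z,t;\boldw))]$ about $t_0$ and then invokes two moment identities of $\partial_t\lambda_\sigma$ (namely $\int \partial_t\lambda_\sigma\,\mathrm{d}t = 0$ and $\int (t-t_0)\,\partial_t\lambda_\sigma\,\mathrm{d}t = -1$, the latter itself proved by integration by parts) to kill the constant term and extract the linear coefficient exactly, so that only the denominator $C$ requires the $\sigma\to 0$ limit. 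You instead integrate by parts directly on the full integrand, differentiate under the expectation via the chain rule $\partial_t T(X_t) = \nabla T(X_t)\,\boldh(X_{t_0};\boldw)$, and then localize \emph{both} integrals at $t_0$ by the Dirac-mass argument. Your version buys two things: (i) it sidesteps the paper's claim that the Taylor expansion has ``no higher order terms'' because the drift is linear in $t$ --- which is only literally true when $T$ is affine, since $T(X_{t_0}+(t-t_0)\boldh)$ is generally nonlinear in $t$ (the paper's conclusion survives because the higher-order moment integrals are $O(\sigma^2)$, but that is not what is argued); and (ii) by explicitly dividing numerator and denominator by $\sigma\sqrt{\pi}$ you handle the unnormalized kernel $\lambda_{t_0}(t)=\exp(-(t-t_0)^2/\sigma^2)$ of \cref{eq.tsm}, whereas the appendix silently switches to a normalized Gaussian. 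The price is that you must justify the interchange of limits (boundary terms, differentiation under $\mathbbE$, dominated convergence), which you correctly flag as requiring regularity assumptions the paper leaves implicit. The final identification $\mathrm{Cov}[T(X_{t_0})] = \mathcal{F}_{t_0}$ is handled the same way in both arguments.
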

The proof can be found in \cref{app:kerneldelta}. Using this limiting solution, the objective of \cref{eq.align} can be rewritten as:  
\begin{align}
\label{eq.simple.obj}
    \| \nabla \mathcal{L}(\boldtheta_{t_0}) - \mathbbE \left[ \nabla^\top T(X_{t_0})  \boldh(X_{t_0}; \boldw)\right] \|_{\mathcal{F}_{t_0}^{-1}}^2. 
\end{align}
Note that the gradient $\nabla \mathcal{L}(\boldtheta_{t_0})$ is an Euclidean gradient.
In our experiments, this objective function is more stable and computationally efficient than \eqref{eq.align}, since it does not require back-propagating through a matrix inversion. 

\subsection{Kernel NGD}
Now let us consider an example of the drift model where the drift function $\boldh$ is defined as the gradient of an Reproducing Kernel Hilbert Space (RKHS) function.

\begin{example}[RKHS Drift Model]
    A kernel drift function is defined as
    $
    \boldh(\boldx; w) := \langle w, \nabla_\boldx k(\cdot, \boldx) \rangle_\mathcal{H}, w \in \mathcal{H}, 
    $
    where $\mathcal{H}$ is an RKHS with a kernel function $k$. 
\end{example}

\begin{algorithm}[t]
    \caption{Generative Model Training Guided by the Natural Gradient Descent}
    \label{alg:simple}
    \begin{algorithmic}[1]
    \REQUIRE Target samples $Y \sim p$, latent samples $Z\sim p_Z$, step size $\epsilon$, number of iterations $n$ 
    
    \STATE $t[0] \gets 0$, initialize $\boldw$. 
    \FOR{\( i \gets 1 \) to \( n \)}
        \STATE Sample $X_{t[i]} \gets g(Z, t[i]; \boldw)$
        \STATE Approximate $\nabla^N \mathcal{L}(\boldtheta_{t[i]})$ with $X_{t[i]}$ and $Y$ using Monte Carlo.  
        \STATE Approximate $\bolddelta_{t[i]}(\boldw)$ with $X_{t[i]}$ using Monte Carlo. 
        \STATE $\boldw \gets \argmin_{\boldw} \| \nabla^N \mathcal{L}(\boldtheta_{t[i]}) - \bolddelta_{t[i]}(\boldw) \|^2$
        \STATE $t[i+1] \gets t[i] + \epsilon$
    \ENDFOR
    
    \STATE \textbf{Return:} Samples from $g(Z, t[n]; \boldw)$
\end{algorithmic}
\end{algorithm}

To align this generative model with the NGD, 
we introduce a regularized version of \cref{eq.simple.obj} 
\begin{align}
    \label{eq.simple.obj2}
    \|\nabla \mathcal{L}(\boldtheta_{t_0}) - \mathbb{E}[\nabla T(X_{t_0}) \boldh_w(X_{t_0})]\|_{\mathcal{F}_{t_0}^{-1}}^2 + \lambda \|w\|^2_\mathcal{H}.
\end{align}

\begin{theorem}
\label{them.kernelNGD}
 The optimal drift function that minimises \cref{eq.simple.obj2} can be found as
\begin{align}
\label{eq.kernel}
&\boldh_{w^*}(\boldx) =  \mathbbE \left[\nabla T(X_{t_0})  \nabla \nabla  k(X_{t_0},  \boldx)\right]^\top \boldGamma^{-1} \nabla \mathcal{L}(\boldtheta_{t_0}) \\
&\boldGamma =  \lambda \mathcal{F}_{t_0} +
\mathbbE\left[\nabla T(X_{t_0}) \nabla \nabla k(X_{t_0},  X_{t_0}')  \nabla^\top T(X_{t_0}')\right] \notag 
\end{align}
where $X_{t_0}'$ is an independent copy of $X_{t_0}$ and $\nabla \nabla k(\boldx, \boldy)$ is the short hand for $\nabla_\boldx \nabla_\boldy k(\boldx, \boldy)$.     
\end{theorem}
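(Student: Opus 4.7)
The plan is to treat \cref{eq.simple.obj2} as a regularised linear least-squares problem over the scalar RKHS $\mathcal{H}$, solve it via the operator normal equations, and simplify using a push-through identity. The first move is to note that the reproducing property gives $\boldh_w(\boldx) = \langle w, \nabla_\boldx k(\cdot, \boldx)\rangle_\mathcal{H} = \nabla w(\boldx)$, so the kernel drift is just the gradient of the RKHS function $w$. Introducing the bounded linear operator
\begin{equation*}
\Phi : \mathcal{H} \to \mathbbR^m, \qquad \Phi w := \mathbbE\!\left[\nabla T(X_{t_0})\,\nabla w(X_{t_0})\right],
\end{equation*}
the objective in \cref{eq.simple.obj2} becomes the clean Hilbert-space quadratic
\begin{equation*}
J(w) = \|\nabla \mathcal{L}(\boldtheta_{t_0}) - \Phi w\|_{\mathcal{F}_{t_0}^{-1}}^2 + \lambda\|w\|_\mathcal{H}^2.
\end{equation*}

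Next I would identify the adjoint $\Phi^*: \mathbbR^m \to \mathcal{H}$. Writing each coordinate of $\nabla w(X_{t_0})$ via the reproducing property, $\partial_i w(\boldx) = \langle w, \partial_{\boldx,i} k(\cdot,\boldx)\rangle_\mathcal{H}$, and pulling the expectation through the inner product yields an explicit $\mathcal{H}$-valued formula for $\Phi^* v$ in terms of first-order partials of $k$. Setting the Fr\'echet derivative of $J$ to zero gives the normal equation $(\lambda I + \Phi^*\mathcal{F}_{t_0}^{-1}\Phi)\,w^* = \Phi^*\mathcal{F}_{t_0}^{-1}\nabla\mathcal{L}(\boldtheta_{t_0})$, and the standard push-through identity
\begin{equation*}
(\lambda I + \Phi^*\mathcal{F}_{t_0}^{-1}\Phi)^{-1}\Phi^*\mathcal{F}_{t_0}^{-1} \;=\; \Phi^*\bigl(\lambda \mathcal{F}_{t_0} + \Phi\Phi^*\bigr)^{-1}
\end{equation*}
collapses the solution to $w^* = \Phi^*\boldGamma^{-1}\nabla\mathcal{L}(\boldtheta_{t_0})$ with $\boldGamma := \lambda \mathcal{F}_{t_0} + \Phi\Phi^*$.

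The final step is to make $\Phi\Phi^*$ and $\nabla w^*$ concrete. Starting from $(\Phi^* v)(\boldy)$ expressed via the reproducing property and differentiating once in $\boldy$ brings down the mixed second partial of $k$, which after a symmetry-of-partials argument is exactly $\nabla\nabla k(X_{t_0},\boldy)^\top$. A further application of $\Phi$, using an independent copy $X'_{t_0}$, yields
\begin{equation*}
\Phi\Phi^* \;=\; \mathbbE\!\left[\nabla T(X_{t_0})\,\nabla\nabla k(X_{t_0},X'_{t_0})\,\nabla^\top T(X'_{t_0})\right],
\end{equation*}
matching the $\boldGamma$ in the statement. Applying the same one-derivative computation to $w^* = \Phi^*\boldGamma^{-1}\nabla\mathcal{L}(\boldtheta_{t_0})$ gives the claimed closed form for $\boldh_{w^*}(\boldx) = \nabla w^*(\boldx)$ after transposing the factor inside the expectation to line up with \cref{eq.kernel}.

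I expect the main obstacle to be the technical bookkeeping in this last step: justifying that $\nabla_\boldy$ commutes with the $\mathcal{H}$-inner product and with $\mathbbE$, so that $\nabla(\Phi^* v)(\boldx)$ really equals the expected mixed-partial expression, and aligning Jacobian-transpose conventions so that $\nabla T$ and $\nabla\nabla k$ land on the correct sides of each product. These moves are routine under standard smoothness and integrability assumptions on $k$ and $T$, but must be checked carefully to reach the precise matrix layout in \cref{eq.kernel}.
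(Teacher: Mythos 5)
Your proposal is correct and follows essentially the same route as the paper's proof: expand the regularised quadratic, set the (Fr\'echet) derivative to zero to get the normal equations, and then move the inverse from the Hilbert-space side to the $\mathrm{dim}(T)$-dimensional side — your ``push-through identity'' is exactly the Woodbury-type identity $(P^{-1}+B^\top R^{-1}B)^{-1}B^\top R^{-1}=PB^\top(BPB^\top+R)^{-1}$ the paper invokes, with $B=\Phi$, $R=\mathcal{F}_{t_0}$, $P^{-1}=\lambda I$, and your identification of $\Phi\Phi^*$ with $\mathbbE[\nabla T\,\nabla\nabla k\,\nabla^\top T]$ matches the paper's final step. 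The operator/adjoint packaging is a slightly cleaner presentation of the same computation, not a different argument.
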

The proof can be found in \cref{sec.proof.kernelNGD}. 
This result enables the \emph{direct calculation} of particle updates without fitting a generative model first.  
This inspires us to build a \emph{particle evolution strategy} guided by NGD: First, we sample $X_0$ from an initial distribution $q_0$, then we iteratively update each sample $X_{t_0}$ using the formula given by \cref{them.kernelNGD} until they converge. This algorithm is summarized in \cref{alg:parNGD} and we name it Kernel implicit NGD (KiNG).


\begin{algorithm}[t]
    \caption{RKHS/Neural Tangent Kernel Implicit Natural Gradient Descent}
    \label{alg:parNGD}
    \begin{algorithmic}[1]
    \REQUIRE Target samples $Y \sim p$, initial particles $X_0 \sim q_0$, step size $\epsilon$, number of iterations $n$ 
    
    \STATE $t[0] \gets 0$
    \FOR{\( i \gets 1 \) to \( n \)}
        \STATE Approximate $\boldh$ with $Y$ and $X_{t[i]}$ using \eqref{eq.kernel} or \eqref{eq.ntk.update}.
        \STATE $t[i+1] = t[i] + \epsilon$
        \STATE $X_{t[i+1]} = X_{t[i]} + \epsilon \boldh(X_{t[i]})$
    \ENDFOR
    
    \STATE \textbf{Return:} $X_{t[n]} \sim q_{t_{[n]}}$.
\end{algorithmic}
\end{algorithm}

In \cref{fig:kngd}, we plot the trajectories of KiNG with different one-dimensional initial and target distributions. We choose $\mathcal{M}(T)$ to be a Gaussian manifold, i.e., $T(x) = [x, x^2]^\top$. 
In the right plot, the particles do not converge to the bi-modal target distribution, since the movements of our  particles are ``guided'' by the Gaussian manifold, and the best approximation is a Gaussian with a larger standard deviation. This example shows that the particles are indeed guided by the Gaussian manifold throughout the generative process. This phenomenon could be beneficial, if the target is to find the best approximation within a given family or we already have an informative probabilistic model (see \cref{sec.gm}). 

This behaviour can be changed by replacing the Gaussian manifold with a more expressive manifold. In the left plot of \cref{fig:newmanifold}, we run similar experiments by letting $T$ be the Radial Basis Functions (RBFs) and we can see that indeed the particles bifurcate and converge to both modes. 



\subsection{Neural Tangent Kernel NGD}
KiNG can be generalized to other types of kernels.

\begin{example}[Neural Tangent Drift Model]
    \label{ex.nt}
    Given a neural network $\boldphi: \mathbbR^{\mathrm{dim}(\boldx)} \to \mathbbR^{\mathrm{dim}(\boldx)}$, a neural tangent drift function is defined as
    \[
    \boldh(\boldx; \boldw) := \nabla_\boldbeta \boldphi(\boldx; \boldbeta_0) \boldw,
    \]
    where $\boldbeta_0$ are initial weights, $\nabla_\boldbeta\boldphi(\boldx; \boldbeta_0)$ is neural tangent. 
\end{example}

\begin{figure}[t]
    \centering
    \includegraphics[width=0.47\linewidth]{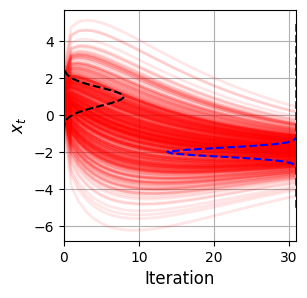}
    \includegraphics[width=0.47\linewidth]{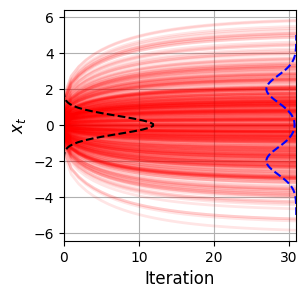}
    \caption{The evolution of particles $X_t$ under KiNG on the manifold of Gaussian distributions. Each red line is a trajectory of a particle in the space-time.  
    The initial distribution $q_0$ is plotted on the left as black dotted lines, while the target distribution $p$ is plotted on the right as blue dotted lines. }
    \label{fig:kngd}
\end{figure}
\begin{figure}[t]
    \centering
    \includegraphics[width=0.47\linewidth]{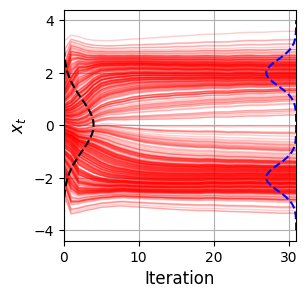}
    \includegraphics[width=0.47\linewidth]{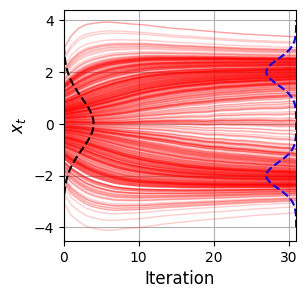}    \caption{Particle trajectories when using more expressive manifold. $T(\boldx) := [k(x_1, b_1), k(x_1, b_2), \cdots]^\top $, where $k$ is an RBF and $b_i$ are kernel basis randomly chosen from samples of $X_{t_0}$. Left: KiNG, Right, ntKiNG. }
    \label{fig:newmanifold}
\end{figure}

Similar to \cref{eq.simple.obj2}, we can solve the following regularized objective to find the update of the particles
\begin{align}
    \label{eq.simple.obj3}
    \|\nabla \mathcal{L}(\boldtheta_{t_0}) - \mathbb{E}[\nabla T(X_{t_0}) \boldh_\boldw(X_{t_0})]\|_{\mathcal{F}_{t_0}^{-1}}^2 + \lambda \|\boldw\|^2.
\end{align}

The optimal drift that minimises \cref{eq.simple.obj3} can be expressed using the \emph{neural tangent kernel} \citep{Jacot2018Neural}:
\begin{align}
    \label{eq.ntk.update}
    &\boldh_{\boldw^*}(\boldx) =   \mathbbE\left[\nabla T(X_{t_0})  \boldK_\mathrm{NTK}(X_{t_0}, \boldx) \right]^\top \boldGamma^{-1} \nabla \mathcal{L}(\boldtheta_{t_0})  \\
    &\boldGamma := \lambda \mathcal{F}_{t_0} +  \mathbbE\left[\nabla T(X_{t_0}) \boldK_\mathrm{NTK}(X_{t_0}, X_{t_0}') \nabla^\top T(X_{t_0}') \right], \notag 
\end{align}
where $\boldK_\mathrm{NTK}$ is the matrix-valued \emph{neural tangent kernel}, defined as $\boldK_\mathrm{NTK}(\boldx, \boldy) := \nabla_\boldbeta \boldphi(\boldx)
\nabla^\top_\boldbeta \boldphi(\boldy)$. 
This result can be proven using the same technique described in \cref{sec.proof.kernelNGD}.
In this paper, we use empirical and a finite-width NTK for simplicity, but NTKs that are infinitely wide can be efficiently computed using off-the-shelf package such as \verb|neural-tangents| \citep{neuraltangents2020} for a variety of neural network architectures. 

The right plot of \cref{fig:newmanifold} shows the particle trajectory of a neural tangent KiNG with $T$ as RBF basis. We name this variant of KiNG as ntKiNG. 

\paragraph{Remark: }
\cref{eq.ntk.update} requires computing a matrix-valued kernel, which may be computationally demanding if the dimensionality of $X_{t_0}$ is high. However, 
in experiments, we observe that the formulation works well with a diagonalized scalar kernel, i.e., 
\begin{align*}
 [\boldK(X_{t_0}, X_{t_0}')]_{l,m \in \left\{1 \dots \mathrm{dim}(\boldx)\right\}} := \begin{cases}
     k(X_{t_0}, X_{t_0}'), & l = m\\
     0, & l \neq m, 
 \end{cases}
\end{align*}
where $k$ is any scalar kernel function. 

\section{Experiments}
\label{sec.exp}
In this section, we further validate our method on toy data and real-world data respectively. The summarized details of experiment setups can be found in \cref{app:expset}. In all experiments, $\mathcal{L}(p, q_\boldtheta) = \mathrm{KL}[p, q_\boldtheta]$. 

\subsection{KiNGs vs. Reverse KL Wasserstein Gradient Flow and MMD Flow}
In this experiment, we compare KiNG, ntKiNG, with reverse KL Wasserstein Gradient Flow (WGF) \citep{gaodeep19,liu2024minimizing} and Maximum Mean Discrepancy flow (MMD flow) \citep{hagemann2024posterior} on small datasets with different dimensions. See \cref{sec.competitor.methods} for more explanations of these methods. 

Since they all minimise different divergences, we measure their performance using MMD \citep{gretton2012akernel} between a fresh batch of target samples and $X_t$.
\begin{figure}
    \centering
    \includegraphics[width=0.47\linewidth]{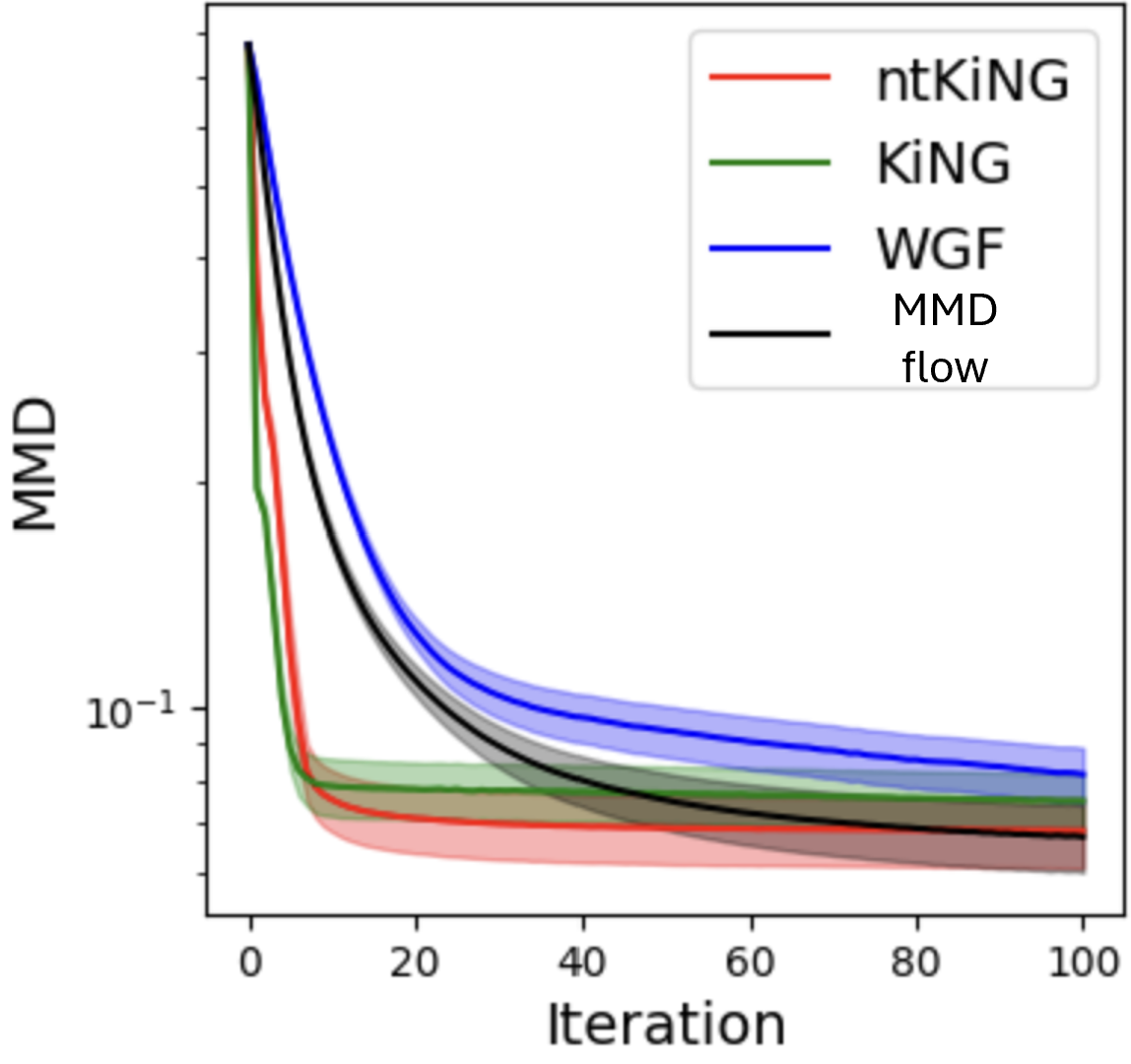}
    \includegraphics[width=0.47\linewidth]{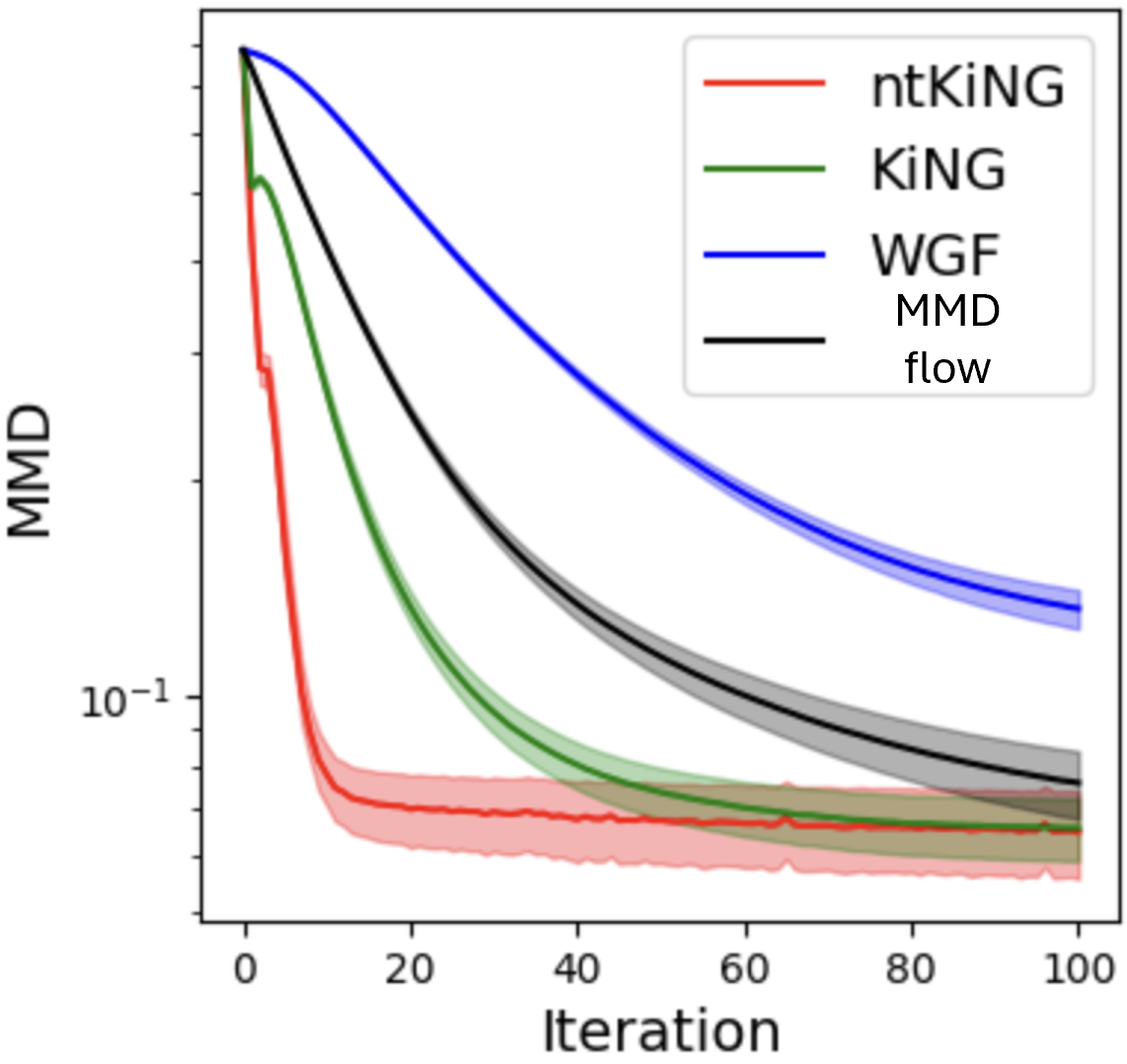}
    \caption{$\mathrm{MMD}[Y, X_t]$ over iterations, the lower the better. Left: 5 dimensions, Right: 20 dimensions. The error bar indicates the standard error. }
    \label{fig:enter-label}
\end{figure}

Let $p = 0.5\mathcal{N}(-2, \boldI) + 0.5\mathcal{N}(2, \boldI)$.
We draw 100 samples from $p$ as $Y$, 100 samples from $\mathcal{N}(0, \boldI)$ as $X_0$, and run KiNG, WGF, MMD flow to evolve particles $X_t$. We plot $\mathrm{MMD}(Y, X_t)$ over iterations. 
For all methods, we set learning rate to be 1, which is the largest learning rates without causing numerical instability. It can be seen that when dimension is small ($5$), all methods work relatively well and ntKiNG and KiNG can reduce MMD faster, but when we increase the dimension to 20 the performance gap widens. However, ntKiNG and KiNG still have a commanding lead. 

\subsection{Graphical Model Recovery with Informative Sufficient Statistics}
\label{sec.gm}
\begin{figure}
    \centering
    \includegraphics[width=0.5\textwidth]{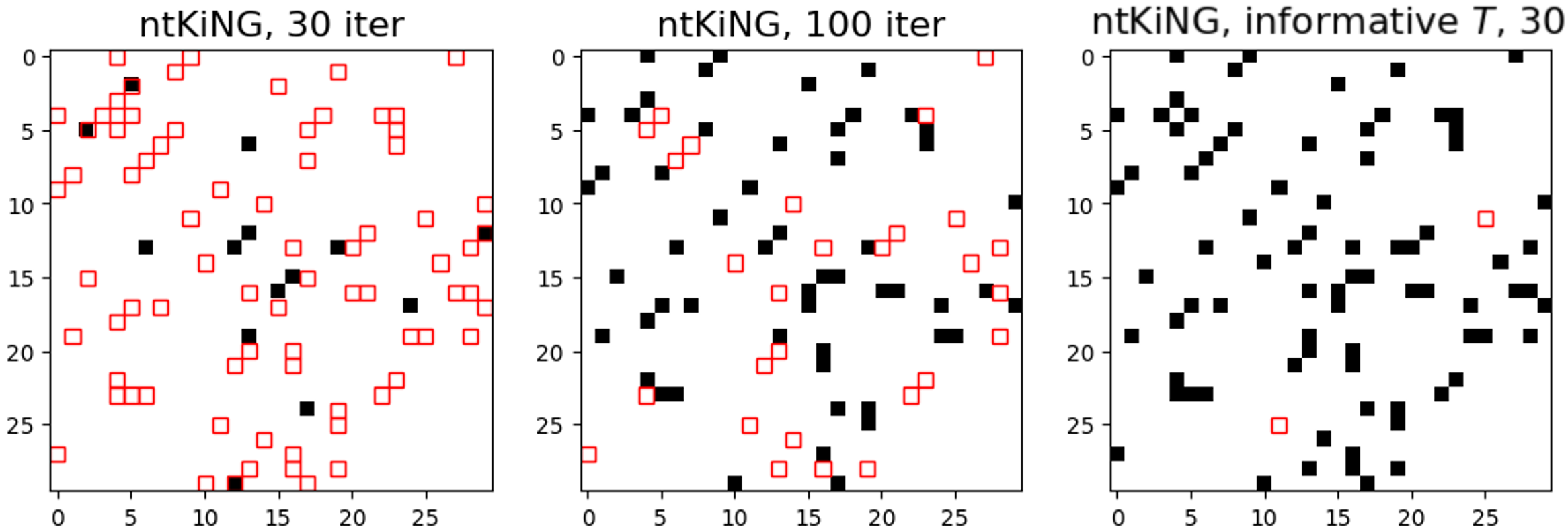}
    \caption{Sparsity pattern of $\boldTheta$ recovered by graphical lasso, using samples trained by ntKiNG with different sufficient statistics and number of iterations (red boxes indicate missing edges, the less boxes the better)}
    \label{fig:graphical_model}
\end{figure}
In this experiment, we showcase how much improvement we can get when using informative sufficient statistics. Exponential family are commonly used to encode graphical models. For example, a Gaussian graphical model is a Gaussian density $p = \mathcal{N}(\boldzero, \boldTheta^{-1})$, where the sparsity pattern of $\boldTheta$ encodes an undirected graph, describing the interactions of the random variables. 
One can imagine that if the generated samples approximate $p$ well, we should recover the correct graphical model from these samples.  
In this experiment, we let $p$ be a 30-dimensional Gaussian graphical model, and draw 200 samples $Y \sim p$, 200 samples $X_0 \sim \mathcal{N}(\boldzero, \boldI)$, and move the $X_0$ toward $p$ using ntKiNG algorithm. Finally we apply the graphical lasso \citep{friedman2007sparse} to estimate graphical models displayed in the left and middle plots of \cref{fig:graphical_model}. Here $T(\boldx) := [\text{RBF basis}]$. 

Since our methods use a probabilistic model to guide its training process, one may wonder if knowing the graphical model structure would improve the performance of the algorithm. To test this, we design a new sufficient statistic $T(\boldx) := [\text{RBF basis}, \forall {(i,j) \in \{(i,j) | \Theta_{i,j} \neq 0} \}, x_i x_j]$, i.e., we added pairwise potential functions that corresponds to pairwise factors in this graphical model. We ran the ntKiNG again with this new, better informed sufficient statistic for 30 runs. The graphical lasso estimate is shown in the right plot of  \cref{fig:graphical_model}. It can be seen that, when using the informed sufficient statistics, ntKiNG could recover the almost-correct graphical structure in only 30 iterations, while it takes the regular ntKiNG much longer. This suggests, our methods can indeed use a pre-existing probabilistic model to accelerate its generative model training process.

\subsection{Covariate Shift by Dist. Matching}
In domain adaptation tasks, samples are drawn from the source distribution $p_{XY}$ and the target distribution $q_{XY}$ where $X, Y$ are covariates and label respectively. The problem is that a classifier trained on source distribution samples may not work on target distribution samples. 
Covariate shift \citep{sugiyama2008direct,quionero2009dataset} refers to a special case where $p_{X} \neq q_{X}$ but $p_{Y|X} = q_{Y|X}$. We adopt a ``marginal transport'' assumption \citep{courty2016optimal} that $X \sim q_X$ are generated as $X = \psi(X')$ 
where $X' \sim p_X$. It means, samples are generated from the source distribution and then ``transported'' to the target domain. For example, images in the source domain contains photos of  objects, while in the target domain, photos contain the same objects but are filtered to reflect certain styles. 

In the covariate shift setting, we observe joint samples from the source $p_{X,Y}$, but only have target domain covariates from $q_X$. The goal is to find $\psi^{-1}$. We find the reverse process by minimising $\mathrm{KL}[p_X, q_t]$ using \cref{alg:parNGD}, where $X_0 \sim q_0$ are set to be the target domain covariates. 

We demonstrate the effectiveness of this algorithm in \cref{fig.transfer.illus}, where the transfer $\psi$ is a clockwise rotation on samples by 45 degrees. An inverse $\psi^{-1}$ is a counter-clockwise rotation and has been correctly identified by ntKiNG.  

We further test our algorithm on the Office+Caltech classification dataset \citep{gong2012geodesic} which is an object recognition dataset with photos collected from four different places: \verb|amazon|, \verb|dslr|, \verb|webcam|, \verb|caltech|. The task is to train a source classifier using one of the places, and test it on samples from another place. We reduce the dimension of the dataset to 50 using PCA and 
test the performance of ntKiNG against two other particle-based transport methods WGF and MMD flow that also matches $q_{t}$ with $p_X$. 
The performance is measured by the percentage gains compared with directly applying the source classifier to the target samples. 
The results in \cref{tab:accuracy_comparison} show that our method achieves the most accuracy gains comparing to WGF and MMD flow. In 10 out of 12 domain adaptations settings, ntKiNG improves the testing accuracy. 

\begin{figure}[t]
    \centering
    \includegraphics[width=.41\textwidth]{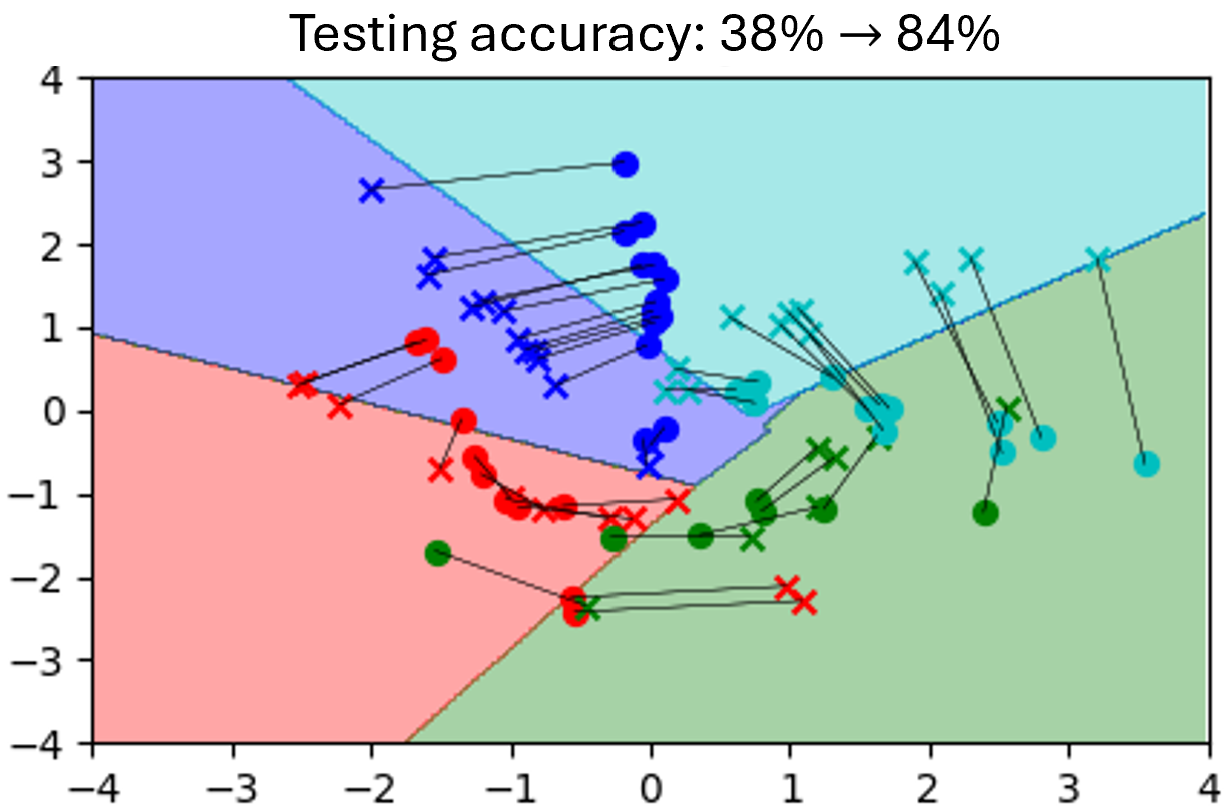}
    \caption{
    The inverse mapping $\psi^{-1}$ found by ntKiNG. The classification boundary of a source classifier trained on \( p \) is depicted in four distinct colors. Target domain samples are marked with \(\bullet\), and KiNG transforms these samples to new positions indicated by \(\times\). Notably, many samples---especially those in the cyan and blue classes---are transported from the incorrect side of the classification boundary to the correct side and the test accuracy increases as a result. }
    \label{fig.transfer.illus}
\end{figure}

\begin{table}[t]
\centering
\begin{tabular}{lcccc}
\hline
Src. $\to$ Tar.        & base ($\%$) & ntKiNG & WGF & MMD flow \\
\hline
amz $\to$ dslr      & 69.50   & \textbf{+13.75}     & -0.50  & +2.75 \\
amz $\to$ web    & 72.75   & \textbf{+8.25}      & -2.00  & +0.75 \\
amz $\to$ cal   & 91.50   & -0.75      & -5.75  & \textbf{+0.00} \\
dslr $\to$ amz      & 86.00   & \textbf{+1.27}      & -6.37  & \textbf{+1.27} \\
dslr $\to$ web      & 98.09   & \textbf{+0.00}      & -0.64  & \textbf{+0.00} \\
dslr $\to$ cal     & 84.08   & \textbf{+5.73}      & -7.64  & +0.64 \\
web $\to$ amz    & 77.97   & \textbf{+3.39}      & -2.71  & +1.02 \\
web $\to$ dslr      & 91.19   & \textbf{+3.39}      & -4.07  & +1.36 \\
web $\to$ cal   & 76.61   & \textbf{+3.39}      & -3.73  & +0.34 \\
cal $\to$ amz   & 82.00   & -2.50      & -4.50  & \textbf{-0.25} \\
cal $\to$ dslr     & 58.25   & \textbf{+16.50}     & +7.00  & +3.25 \\
cal $\to$ web   & 65.50   & \textbf{+10.25}     & +1.00  & +2.00 \\
\hline
\textbf{Average}     & \textbf{79.45}  & \textbf{+5.22} & \textbf{-2.49} & \textbf{+1.09} \\
\hline
\end{tabular}
\caption{Comparison of testing accuracy differences (in $\%$) relative to the base classifier without any transfer learning. }
\label{tab:accuracy_comparison}
\end{table}

\subsection{Sample Denoising with Pretrained Energy-based Model (EBM)}
\begin{figure}[t]
    \centering
    \includegraphics[width=0.99\linewidth]{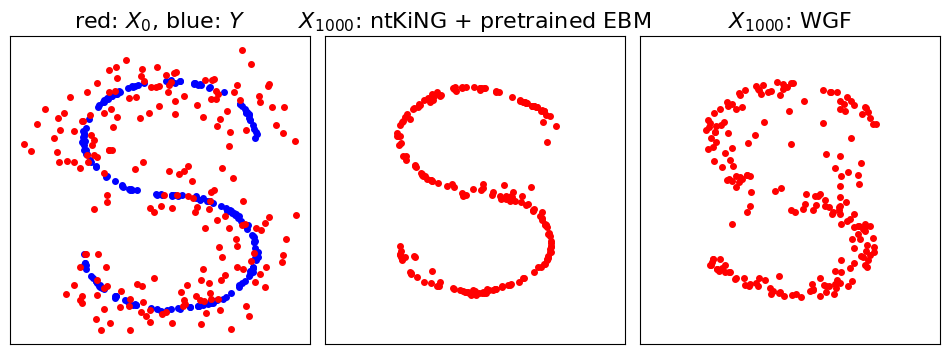}
    \caption{Denoised samples obtained by ntKiNG and WGF. ntKiNG uses pretrained energy-based model as $T$. }
    \label{fig:scurve2}
\end{figure}

\begin{figure}[t]
    \centering
    \includegraphics[width=0.9\linewidth]{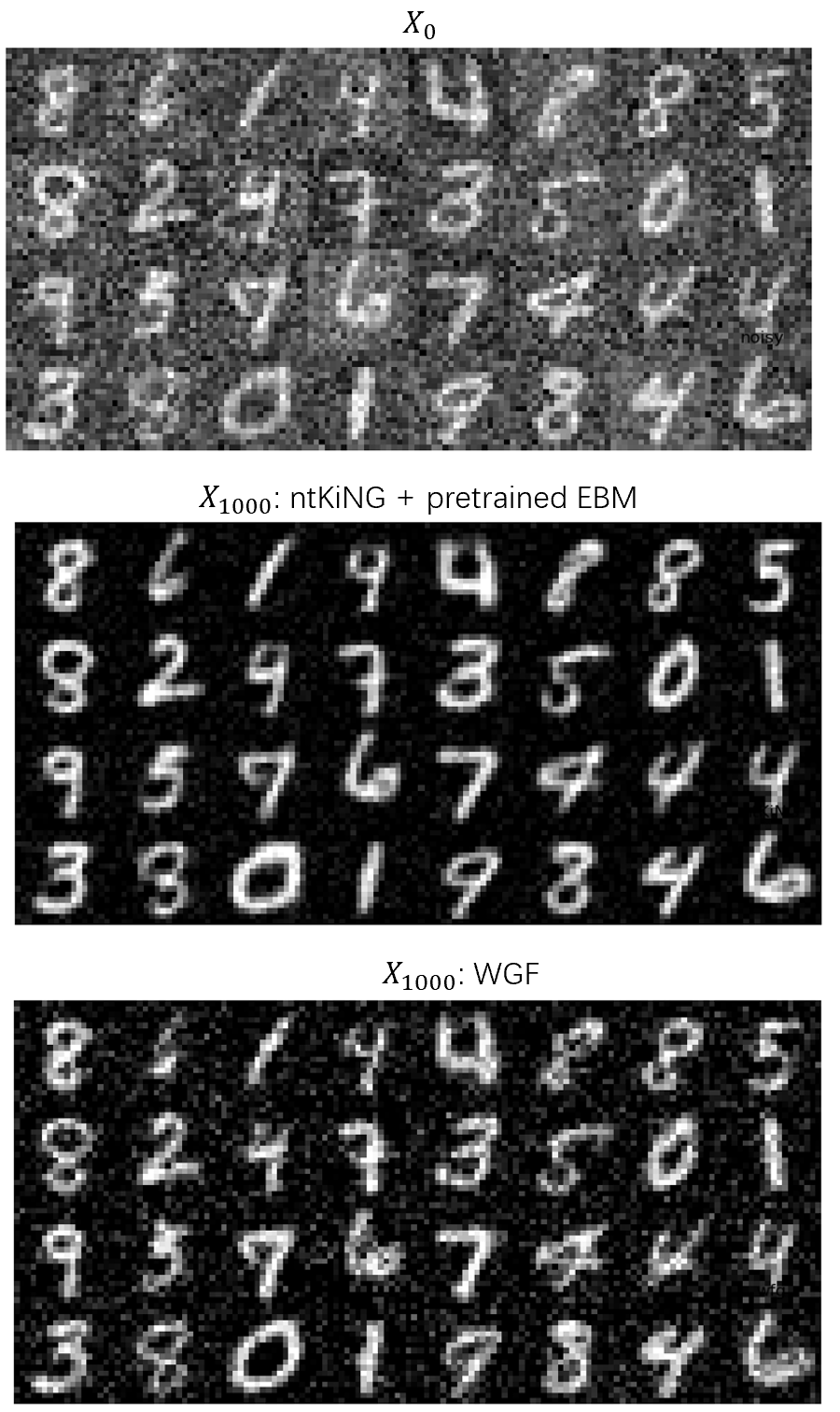}
    \caption{Denoised MNIST images}
    \label{fig:mnist}
\end{figure}

To further investigate the performance of iNGD on higher-dimensional datasets and more sophisticated sufficient statistics, we conduct two denoising experiments. 

We begin by drawing samples $Y\sim q$ and train a deep energy-based model using denoising score matching \citep{song2021train}. We define the sufficient statistic \( T(\boldx) \) as the output of the penultimate layer of the EBM, ensuring that \( T(\boldx) \) captures information about the distribution \( q \). The structure of the EBM can be found in \cref{sec.denoising.ebm}. 

We then construct noisy samples by adding Gaussian noises to $Y$ as $X_0$ and run ntKiNG and WGF using $Y$ and $X_0$ as the target and initial samples with 1000 iterations.  

We first test the procedure on an S-curve dataset and the result is shown in Figure \ref{fig:scurve2}. It can be seen that ntKiNG more effectively recovers the S-shape compared to WGF.

We then perform the experiment on the MNIST dataset (downsampled to 18 by 18), using the same model architecture and estimator to construct \( T \). As we can see from Figure \ref{fig:mnist},
both methods act as decent denoisers, but ntKiNG produces cleaner reconstructions than WGF.

\section{Related Works}

Our methods bridge the gap between generative model training/sampling and parametric model optimization. Both domains are extensively studied in the literature. 

There has been a trend toward using optimization techniques to sample from unknown distributions. These methods first draw samples from an initial distribution and then move them according to a time-dependent velocity field \citep{liu2017stein,Chewi2020,maurais2024sampling}. A typical family of methods is the Wasserstein Gradient Flow \citep{ambrosio2008gradient}, which has found various applications outside of sampling, such as generative modelling \citep{gaodeep19,choi2024scalable} and missing data imputation \citep{chen2024rethinking}. Our method falls within this family of algorithms, and we compared two of its variants in our experiments. 
To the best of our knowledge, none of the existing approaches could leverage a pre-existing probabilistic model to guide the flow of particles. Our framework is also more general: \cref{alg:simple} works for non-particle based generative models as well (as we see in \cref{sec.gm}). 

Another trend in generative modelling is ``flow matching'', where one aligns the drift function with a pre-constructed flow \citep{lipman2023flow,liu2023flow}. In a similar spirit, our method also aligns the instantaneous change of the generative  distribution with a prescribed dynamics (NGD). However, instead of directly matching the velocity field in the sample space, we match the projections of these changes in the parametric space. This approach avoids building arbitrary "bridges" between the reference and target distributions in sample space and instead leverages an effective parametric optimization algorithm to guide the training of the generative model.

In recent years, there has also been efforts to accelerate and approximate NGD using kernel methods, for example,   \citep{Arbel2020Kernelized,Li2019Affine} propose to approximate the natural gradient by optimizing a dual formulation. 
However, both methods consider optimizing a probabilistic model, rather than a generative model as described in this paper. Performing NGD requires inverting a large matrix. 
Many research on NGD focuses on approximating the inverse the Fisher Information Matrix \citep{martens15optimizing,grosse2016kronecker,george2018fast}. 
Our particle update, e.g., \cref{them.kernelNGD} also requires us inverting a matrix with the dimension of the sufficient statistic. 
It would be an interesting future work to see if these techniques could be adapted to our approach. 

\section{Limitations and Future Works}
The effectiveness of the guidance heavily relies on the choice of the exponential family manifold, which is determined by the sufficient statistics $T$. If the guidance is weak, the particles may fail to converge to the target distribution, as demonstrated in Figure \ref{fig:kngd}. In this paper, we show that sophisticated sufficient statistics—such as RBF features or a pretrained EBM—can achieve promising results. Developing theories to better understand the choices of sufficient statistics is an important future work.  

In this paper, we only focus on drift-based generative model for its simplicity. An interesting future work is studying the applicability of our methods to other types of generative models (e.g., GAN or diffusion model). 
 
The primary computational bottleneck of our method lies in the inversion of the $\mathrm{dim}(T) \times \mathrm{dim}(T)$ matrix $\boldGamma$ in \eqref{eq.ntk.update}, which become computationally prohibitive if $\mathrm{dim}(T)$ is high. For some choices of $T$ (e.g., RBF), reducing $\mathrm{dim}(T)$ also reduces its expressiveness. Thus, extending iNGD to a high-dimensional $T$ is an urgent future work. Both our method and MMD flow requires computing kernel matrix, which has an $n^2$ computational complexity. However, there is no matrix inversion involved in MMD flow. 

More generally, despite the promising results, the benefits of guiding the generative model in a parametric space remain to be clarified. Developing theories and applications that compare our method with established generative approaches—such as diffusion models—represents an interesting direction for future work.

\section*{Acknowledgements}
We thank four anonymous reviewers and the area chair for their insightful comments. We thank \href{https://sites.google.com/view/sp-monte-carlo/}{Dr. Sam Power}, \href{https://research-information.bris.ac.uk/en/persons/katerina-karoni}{Dr. Katerina Karoni} and \href{https://www.bristolmathsresearch.org/statistical-science/reading-groups/}{Bristol Machine Learning Reading Group} for helpful discussions.

\bibliography{refs}

\begin{thebibliography}{54}
\providecommand{\natexlab}[1]{#1}
\providecommand{\url}[1]{\texttt{#1}}
\expandafter\ifx\csname urlstyle\endcsname\relax
  \providecommand{\doi}[1]{doi: #1}\else
  \providecommand{\doi}{doi: \begingroup \urlstyle{rm}\Url}\fi

\bibitem[Amari(1998)]{amari1998natural}
Shun-Ichi Amari.
\newblock Natural gradient works efficiently in learning.
\newblock \emph{Neural computation}, 10\penalty0 (2):\penalty0 251--276, 1998.

\bibitem[Amari and Nagaoka(2000)]{amari2000methods}
Shun-ichi Amari and Hiroshi Nagaoka.
\newblock \emph{Methods of information geometry}, volume 191.
\newblock American Mathematical Soc., 2000.

\bibitem[Ambrosio et~al.(2008)Ambrosio, Gigli, and
  Savar{\'e}]{ambrosio2008gradient}
Luigi Ambrosio, Nicola Gigli, and Giuseppe Savar{\'e}.
\newblock \emph{Gradient flows: in metric spaces and in the space of
  probability measures}.
\newblock Springer Science \& Business Media, 2008.

\bibitem[Arbel et~al.(2020)Arbel, Gretton, Li, and
  Montufar]{Arbel2020Kernelized}
M~Arbel, A~Gretton, W~Li, and G~Montufar.
\newblock Kernelized wasserstein natural gradient.
\newblock In \emph{International Conference on Learning Representations}, 2020.

\bibitem[Arbel and Gretton(2018)]{arbel2018kernel}
Michael Arbel and Arthur Gretton.
\newblock Kernel conditional exponential family.
\newblock In \emph{International Conference on Artificial Intelligence and
  Statistics}, 2018.

\bibitem[Arbel et~al.(2019)Arbel, Korba, Salim, and Gretton]{arbel2019maximum}
Michael Arbel, Anna Korba, Adil Salim, and Arthur Gretton.
\newblock Maximum mean discrepancy gradient flow.
\newblock \emph{Advances in Neural Information Processing Systems}, 2019.

\bibitem[Arjovsky et~al.(2017)Arjovsky, Chintala, and
  Bottou]{arjovsky2017wasserstein}
Martin Arjovsky, Soumith Chintala, and L{\'e}on Bottou.
\newblock Wasserstein generative adversarial networks.
\newblock In \emph{International Conference on Machine Learning}, 2017.

\bibitem[Bernacchia et~al.(2018)Bernacchia, Lengyel, and
  Hennequin]{bernacchia2018exact}
Alberto Bernacchia, M{\'a}t{\'e} Lengyel, and Guillaume Hennequin.
\newblock Exact natural gradient in deep linear networks and its application to
  the nonlinear case.
\newblock \emph{Advances in Neural Information Processing Systems}, 2018.

\bibitem[Bortoli et~al.(2021)Bortoli, Thornton, Heng, and
  Doucet]{bortoli2021diffusion}
Valentin~De Bortoli, James Thornton, Jeremy Heng, and Arnaud Doucet.
\newblock Diffusion schr\"odinger bridge with applications to score-based
  generative modeling.
\newblock In \emph{Advances in Neural Information Processing Systems}, 2021.

\bibitem[Casella and Berger(2024)]{casella2024statistical}
George Casella and Roger Berger.
\newblock \emph{Statistical inference, 2nd Edition}.
\newblock CRC press, 2024.

\bibitem[Chen et~al.(2018)Chen, Rubanova, Bettencourt, and
  Duvenaud]{chen2018neural}
Ricky~TQ Chen, Yulia Rubanova, Jesse Bettencourt, and David~K Duvenaud.
\newblock Neural ordinary differential equations.
\newblock \emph{Advances in Neural Information Processing Systems}, 2018.

\bibitem[Chen et~al.(2024{\natexlab{a}})Chen, Li, Wang, Zhang, Xu, Jiang, Song,
  and Wang]{chen2024rethinking}
Zhichao Chen, Haoxuan Li, Fangyikang Wang, Odin Zhang, Hu~Xu, Xiaoyu Jiang,
  Zhihuan Song, and Hao Wang.
\newblock Rethinking the diffusion models for missing data imputation: A
  gradient flow perspective.
\newblock In \emph{Advances in Neural Information Processing Systems},
  2024{\natexlab{a}}.

\bibitem[Chen et~al.(2024{\natexlab{b}})Chen, McCarran, Vizcaino, Soljacic, and
  Luo]{chenteng}
Zhuo Chen, Jacob McCarran, Esteban Vizcaino, Marin Soljacic, and Di~Luo.
\newblock Teng: Time-evolving natural gradient for solving pdes with deep
  neural nets toward machine precision.
\newblock In \emph{International Conference on Machine Learning},
  2024{\natexlab{b}}.

\bibitem[Cheng et~al.(2018)Cheng, Chatterji, Bartlett, and
  Jordan]{cheng2018underdamped}
Xiang Cheng, Niladri~S Chatterji, Peter~L Bartlett, and Michael~I Jordan.
\newblock Underdamped langevin mcmc: A non-asymptotic analysis.
\newblock In \emph{Conference on learning theory}, 2018.

\bibitem[Chewi(2024)]{sinho2024logconcave}
Sinho Chewi.
\newblock \emph{Log-Concave Sampling}.
\newblock 2024.

\bibitem[Chewi et~al.(2020)Chewi, Le~Gouic, Lu, Maunu, and Rigollet]{Chewi2020}
Sinho Chewi, Thibaut Le~Gouic, Chen Lu, Tyler Maunu, and Philippe Rigollet.
\newblock Svgd as a kernelized wasserstein gradient flow of the chi-squared
  divergence.
\newblock In \emph{Advances in Neural Information Processing Systems}, 2020.

\bibitem[Choi et~al.(2024)Choi, Choi, and Kang]{choi2024scalable}
Jaemoo Choi, Jaewoong Choi, and Myungjoo Kang.
\newblock Scalable wasserstein gradient flow for generative modeling through
  unbalanced optimal transport.
\newblock In \emph{International Conference on Machine Learning}, 2024.

\bibitem[Choi et~al.(2022)Choi, Meng, Song, and Ermon]{choi2022density}
Kristy Choi, Chenlin Meng, Yang Song, and Stefano Ermon.
\newblock Density ratio estimation via infinitesimal classification.
\newblock In \emph{International Conference on Artificial Intelligence and
  Statistics}, 2022.

\bibitem[Courty et~al.(2016)Courty, Flamary, Tuia, and
  Rakotomamonjy]{courty2016optimal}
Nicolas Courty, R{\'e}mi Flamary, Devis Tuia, and Alain Rakotomamonjy.
\newblock Optimal transport for domain adaptation.
\newblock \emph{IEEE transactions on pattern analysis and machine
  intelligence}, 39\penalty0 (9):\penalty0 1853--1865, 2016.

\bibitem[Friedman et~al.(2007)Friedman, Hastie, and
  Tibshirani]{friedman2007sparse}
Jerome Friedman, Trevor Hastie, and Robert Tibshirani.
\newblock Sparse inverse covariance estimation with the graphical lasso.
\newblock \emph{Biostatistics}, 9\penalty0 (3):\penalty0 432--441, 2007.

\bibitem[Gao et~al.(2019)Gao, Jiao, Wang, Wang, Yang, and Zhang]{gaodeep19}
Yuan Gao, Yuling Jiao, Yang Wang, Yao Wang, Can Yang, and Shunkang Zhang.
\newblock Deep generative learning via variational gradient flow.
\newblock In \emph{International Conference on Machine Learning}, 2019.

\bibitem[George et~al.(2018)George, Laurent, Bouthillier, Ballas, and
  Vincent]{george2018fast}
Thomas George, C{\'e}sar Laurent, Xavier Bouthillier, Nicolas Ballas, and
  Pascal Vincent.
\newblock Fast approximate natural gradient descent in a kronecker factored
  eigenbasis.
\newblock \emph{Advances in Neural Information Processing Systems}, 2018.

\bibitem[Gong et~al.(2012)Gong, Shi, Sha, and Grauman]{gong2012geodesic}
Boqing Gong, Yuan Shi, Fei Sha, and Kristen Grauman.
\newblock Geodesic flow kernel for unsupervised domain adaptation.
\newblock In \emph{Conference on Computer Vision and Pattern Recognition},
  2012.

\bibitem[Goodfellow et~al.(2014)Goodfellow, Pouget-Abadie, Mirza, Xu,
  Warde-Farley, Ozair, Courville, and Bengio]{goodfellow2014generative}
Ian Goodfellow, Jean Pouget-Abadie, Mehdi Mirza, Bing Xu, David Warde-Farley,
  Sherjil Ozair, Aaron Courville, and Yoshua Bengio.
\newblock Generative adversarial nets.
\newblock \emph{Advances in Neural Information Processing Systems}, 2014.

\bibitem[Gretton et~al.(2012)Gretton, Borgwardt, Rasch, Sch{{\"o}}lkopf, and
  Smola]{gretton2012akernel}
Arthur Gretton, Karsten~M. Borgwardt, Malte~J. Rasch, Bernhard Sch{{\"o}}lkopf,
  and Alexander Smola.
\newblock A kernel two-sample test.
\newblock \emph{Journal of Machine Learning Research}, 13\penalty0
  (25):\penalty0 723--773, 2012.

\bibitem[Grosse and Martens(2016)]{grosse2016kronecker}
Roger Grosse and James Martens.
\newblock A kronecker-factored approximate fisher matrix for convolution
  layers.
\newblock In \emph{International Conference on Machine Learning}, 2016.

\bibitem[Gu et~al.(2022)Gu, Chen, Bao, Wen, Zhang, Chen, Yuan, and
  Guo]{gu2022vector}
Shuyang Gu, Dong Chen, Jianmin Bao, Fang Wen, Bo~Zhang, Dongdong Chen, Lu~Yuan,
  and Baining Guo.
\newblock Vector quantized diffusion model for text-to-image synthesis.
\newblock In \emph{Conference on Computer Vision and Pattern Recognition},
  2022.

\bibitem[Hagemann et~al.(2024)Hagemann, Hertrich, Altekr{\"u}ger, Beinert,
  Chemseddine, and Steidl]{hagemann2024posterior}
Paul Hagemann, Johannes Hertrich, Fabian Altekr{\"u}ger, Robert Beinert, Jannis
  Chemseddine, and Gabriele Steidl.
\newblock Posterior sampling based on gradient flows of the {MMD} with negative
  distance kernel.
\newblock In \emph{International Conference on Learning Representations}, 2024.

\bibitem[He et~al.(2025)He, Chen, Zhang, Barber, and
  Hern{\'a}ndez-Lobato]{he2024training}
Jiajun He, Wenlin Chen, Mingtian Zhang, David Barber, and Jos{\'e}~Miguel
  Hern{\'a}ndez-Lobato.
\newblock Training neural samplers with reverse diffusive kl divergence.
\newblock \emph{arXiv preprint arXiv:2410.12456}, 2025.

\bibitem[Ho et~al.(2020)Ho, Jain, and Abbeel]{ho2020denoising}
Jonathan Ho, Ajay Jain, and Pieter Abbeel.
\newblock Denoising diffusion probabilistic models.
\newblock \emph{Advances in Neural Information Processing Systems}, 2020.

\bibitem[Jacot et~al.(2018)Jacot, Gabriel, and Hongler]{Jacot2018Neural}
Arthur Jacot, Franck Gabriel, and Cl\'{e}ment Hongler.
\newblock Neural tangent kernel: convergence and generalization in neural
  networks.
\newblock In \emph{Advances in Neural Information Processing Systems}, 2018.

\bibitem[Kingma and Welling(2014)]{Kingma2014}
Diederik~P. Kingma and Max Welling.
\newblock Auto-encoding variational bayes.
\newblock In \emph{International Conference on Learning Representations}, 2014.

\bibitem[Li et~al.(2019{\natexlab{a}})Li, Liu, Liu, Zhao, and
  Liu]{li2019neural}
Naihan Li, Shujie Liu, Yanqing Liu, Sheng Zhao, and Ming Liu.
\newblock Neural speech synthesis with transformer network.
\newblock In \emph{AAAI Conference on Artificial Intelligence},
  2019{\natexlab{a}}.

\bibitem[Li et~al.(2019{\natexlab{b}})Li, Lin, and Mont{\'u}far]{Li2019Affine}
Wuchen Li, Alex~Tong Lin, and Guido Mont{\'u}far.
\newblock Affine natural proximal learning.
\newblock In \emph{International Conference on Geometric Science of
  Information}, 2019{\natexlab{b}}.

\bibitem[Lipman et~al.(2023)Lipman, Chen, Ben-Hamu, Nickel, and
  Le]{lipman2023flow}
Yaron Lipman, Ricky T.~Q. Chen, Heli Ben-Hamu, Maximilian Nickel, and Matthew
  Le.
\newblock Flow matching for generative modeling.
\newblock In \emph{International Conference on Learning Representations}, 2023.

\bibitem[Liu(2017)]{liu2017stein}
Qiang Liu.
\newblock Stein variational gradient descent as gradient flow.
\newblock In \emph{Advances in Neural Information Processing Systems}, 2017.

\bibitem[Liu et~al.(2024)Liu, Yu, Simons, Yi, and Beaumont]{liu2024minimizing}
Song Liu, Jiahao Yu, Jack Simons, Mingxuan Yi, and Mark Beaumont.
\newblock Minimizing $f$-divergences by interpolating velocity fields.
\newblock In \emph{International Conference on Machine Learning}, 2024.

\bibitem[Liu et~al.(2023)Liu, Gong, and qiang liu]{liu2023flow}
Xingchao Liu, Chengyue Gong, and qiang liu.
\newblock Flow straight and fast: Learning to generate and transfer data with
  rectified flow.
\newblock In \emph{International Conference on Learning Representations}, 2023.

\bibitem[Martens and Grosse(2015)]{martens15optimizing}
James Martens and Roger Grosse.
\newblock Optimizing neural networks with kronecker-factored approximate
  curvature.
\newblock In \emph{International Conference on Machine Learning}, 2015.

\bibitem[Maurais and Marzouk(2024)]{maurais2024sampling}
Aimee Maurais and Youssef Marzouk.
\newblock Sampling in unit time with kernel fisher-rao flow.
\newblock In \emph{International Conference on Machine Learning}, 2024.

\bibitem[Novak et~al.(2020)Novak, Xiao, Hron, Lee, Alemi, Sohl-Dickstein, and
  Schoenholz]{neuraltangents2020}
Roman Novak, Lechao Xiao, Jiri Hron, Jaehoon Lee, Alexander~A. Alemi, Jascha
  Sohl-Dickstein, and Samuel~S. Schoenholz.
\newblock Neural tangents: Fast and easy infinite neural networks in python.
\newblock In \emph{International Conference on Learning Representations}, 2020.

\bibitem[Qui{\~n}onero-Candela et~al.(2009)Qui{\~n}onero-Candela, Sugiyama,
  Schwaighofer, and Lawrence]{quionero2009dataset}
Joaquin Qui{\~n}onero-Candela, Masashi Sugiyama, Anton Schwaighofer, and Neil~D
  Lawrence.
\newblock \emph{Dataset shift in machine learning}.
\newblock Mit Press, 2009.

\bibitem[Rezende and Mohamed(2015)]{rezende2015variational}
Danilo~Jimenez Rezende and Shakir Mohamed.
\newblock Variational inference with normalizing flows.
\newblock In \emph{International Conference on Machine Learning}, 2015.

\bibitem[Rombach et~al.(2022)Rombach, Blattmann, Lorenz, Esser, and
  Ommer]{rombach2022high}
Robin Rombach, Andreas Blattmann, Dominik Lorenz, Patrick Esser, and Bj{\"o}rn
  Ommer.
\newblock High-resolution image synthesis with latent diffusion models.
\newblock In \emph{Conference on Computer Vision and Pattern Recognition},
  2022.

\bibitem[Song and Kingma(2021)]{song2021train}
Yang Song and Diederik~P Kingma.
\newblock How to train your energy-based models.
\newblock \emph{arXiv preprint arXiv:2101.03288}, 2021.

\bibitem[Song et~al.(2021)Song, Sohl-Dickstein, Kingma, Kumar, Ermon, and
  Poole]{song2021scorebased}
Yang Song, Jascha Sohl-Dickstein, Diederik~P Kingma, Abhishek Kumar, Stefano
  Ermon, and Ben Poole.
\newblock Score-based generative modeling through stochastic differential
  equations.
\newblock In \emph{International Conference on Learning Representations}, 2021.

\bibitem[Sriperumbudur et~al.(2017)Sriperumbudur, Fukumizu, Gretton,
  Hyv\"{a}rinen, and Kumar]{sriperumbudur2017density}
Bharath Sriperumbudur, Kenji Fukumizu, Arthur Gretton, Aapo Hyv\"{a}rinen, and
  Revant Kumar.
\newblock Density estimation in infinite dimensional exponential families.
\newblock \emph{Journal of Machine Learning Research}, 18\penalty0
  (57):\penalty0 1--59, 2017.

\bibitem[Sugiyama et~al.(2008)Sugiyama, Nakajima, Kashima, Buenau, and
  Kawanabe]{sugiyama2008direct}
Masashi Sugiyama, Shinichi Nakajima, Hisashi Kashima, Paul Buenau, and Motoaki
  Kawanabe.
\newblock Direct importance estimation with model selection and its application
  to covariate shift adaptation.
\newblock In \emph{Advances in Neural Information Processing Systems}, 2008.

\bibitem[Tan et~al.(2024)Tan, Chen, Liu, Cong, Zhang, Liu, Wang, Leng, Yi, He,
  et~al.]{tan2024naturalspeech}
Xu~Tan, Jiawei Chen, Haohe Liu, Jian Cong, Chen Zhang, Yanqing Liu, Xi~Wang,
  Yichong Leng, Yuanhao Yi, Lei He, et~al.
\newblock Naturalspeech: End-to-end text-to-speech synthesis with human-level
  quality.
\newblock \emph{IEEE Transactions on Pattern Analysis and Machine
  Intelligence}, 2024.

\bibitem[Wainwright et~al.(2008)Wainwright, Jordan,
  et~al.]{wainwright2008graphical}
Martin~J Wainwright, Michael~I Jordan, et~al.
\newblock Graphical models, exponential families, and variational inference.
\newblock \emph{Foundations and Trends{\textregistered} in Machine Learning},
  1\penalty0 (1--2):\penalty0 1--305, 2008.

\bibitem[Wang et~al.(2023)Wang, Zheng, He, Chen, and Zhou]{wang2022diffusion}
Zhendong Wang, Huangjie Zheng, Pengcheng He, Weizhu Chen, and Mingyuan Zhou.
\newblock Diffusion-gan: Training gans with diffusion.
\newblock In \emph{International Conference on Learning Representations}, 2023.

\bibitem[Welling(2019)]{welling2019notes}
Max Welling.
\newblock Kernel ridge regression.
\newblock
  \url{https://web2.qatar.cmu.edu/~gdicaro/10315-Fall19/additional/welling-notes-on-kernel-ridge.pdf},
  2019.
\newblock Accessed: 2025-02-10.

\bibitem[Wibisono(2018)]{wibisono2018sampling}
Andre Wibisono.
\newblock Sampling as optimization in the space of measures: The langevin
  dynamics as a composite optimization problem.
\newblock In \emph{Conference on Learning Theory}, 2018.

\bibitem[Williams et~al.(2025)Williams, Wang, Ying, Liu, and
  Kolar]{williams2024high}
Daniel~J Williams, Leyang Wang, Qizhen Ying, Song Liu, and Mladen Kolar.
\newblock High-dimensional differential parameter inference in exponential
  family using time score matching.
\newblock \emph{International conference on artificial intelligence and
  statistics}, 2025.

\end{thebibliography}

\newpage
\appendix
\onecolumn

\section{Proof of Theorem \ref{thm:delta3.1}}
\label{app:3.1}
\begin{theorem}(Theorem 4.1 in \citep{williams2024high}) 
    \cref{eq.locallinear.obj} can be rewritten as the following form
\begin{align}
    & \mathcal{L}(\bolddelta) = \int_0^1 \lambda_{t_0}(t)\mathbb{E}\left[ \langle \bolddelta,T(X_t) - \mathbb{E}[T(X^{\prime}_t)] \rangle^2 \right]\mathrm{d}t + 
    2 \int_0^1 \partial_t \lambda_{t_0}(t) \mathbb{E}\left[ \langle{\bolddelta, T({X_t})} \rangle \right]\mathrm{d}t +\text{const.} 
    \label{eq:deriv:objfinal}
\end{align}
\end{theorem}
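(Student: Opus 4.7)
The plan is to eliminate the time score $s_t$ from $J(\bolddelta)$ using two standard moves: expanding the square, and then performing integration by parts in $t$, exploiting that $s_t = \partial_t \log q_t$ essentially converts $s_t(X_t)$ inside an expectation into a time derivative of that expectation.

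First I would expand the quadratic $\bigl(s_t(X_t) - \langle \bolddelta, T(X_t) - \mathbb{E}[T(X_t')]\rangle\bigr)^2$ into three pieces. The $s_t(X_t)^2$ piece is independent of $\bolddelta$ and goes into the ``const.'' bucket. The $\langle \bolddelta, T(X_t) - \mathbb{E}[T(X_t')]\rangle^2$ piece, after weighting by $\lambda_{t_0}(t)$ and integrating in $t$, is exactly the first summand on the right-hand side of the claim. So the whole content of the theorem lies in rewriting the cross term $-2\int \lambda_{t_0}(t)\, \mathbb{E}\bigl[s_t(X_t)\langle \bolddelta, T(X_t) - \mathbb{E}[T(X_t')]\rangle\bigr] \mathrm{d}t$.

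Second, I would simplify this cross term. Observe that $\mathbb{E}[s_t(X_t)] = \int \partial_t q_t(\boldx)\,\mathrm{d}\boldx = \partial_t \int q_t(\boldx)\,\mathrm{d}\boldx = 0$ (exchanging $\partial_t$ and $\int$), so the $\mathbb{E}[T(X_t')]$ subtraction contributes nothing and the cross term collapses to $-2\int \lambda_{t_0}(t)\, \mathbb{E}[s_t(X_t)\langle \bolddelta, T(X_t)\rangle]\,\mathrm{d}t$. The same manipulation, applied to a generic test function $f$, yields the key identity $\mathbb{E}[s_t(X_t)f(X_t)] = \int \partial_t q_t(\boldx)\, f(\boldx)\,\mathrm{d}\boldx = \partial_t \mathbb{E}[f(X_t)]$ for any time-independent $f$. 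Taking $f(\boldx) = \langle \bolddelta, T(\boldx)\rangle$ turns the inner expectation into $\partial_t \mathbb{E}[\langle \bolddelta, T(X_t)\rangle]$.

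Third, I would integrate by parts in $t$ to move the derivative off the expectation and onto $\lambda_{t_0}$: $-2\int \lambda_{t_0}(t)\, \partial_t \mathbb{E}[\langle \bolddelta, T(X_t)\rangle]\,\mathrm{d}t = -2\bigl[\lambda_{t_0}(t)\,\mathbb{E}[\langle \bolddelta, T(X_t)\rangle]\bigr]_{\text{bdy}} + 2\int \partial_t \lambda_{t_0}(t)\, \mathbb{E}[\langle \bolddelta, T(X_t)\rangle]\,\mathrm{d}t$, which is precisely the second summand in the claim. The boundary term vanishes because $\lambda_{t_0}$ is compactly supported or decays at the endpoints (Gaussian on $\mathbb{R}$, or vanishing at $0$ and $1$), or else is independent of $\bolddelta$ and gets absorbed into the constant.

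The main obstacle I expect is purely technical: justifying the two applications of differentiation under the integral sign (to conclude $\mathbb{E}[s_t] = 0$ and $\mathbb{E}[s_t(X_t) f(X_t)] = \partial_t \mathbb{E}[f(X_t)]$) and verifying vanishing boundary terms in the integration by parts. These are standard dominated-convergence arguments requiring mild regularity and decay on $q_t$, $T$, and $\lambda_{t_0}$, which we inherit from the setting used in \citet{williams2024high}.
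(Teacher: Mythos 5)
Your proof is correct, and it is worth noting that the paper itself never proves this statement: it imports it wholesale as Theorem 4.1 of \citet{williams2024high}, and the appendix argument for Theorem \ref{thm:delta3.1} simply takes the rewritten form \eqref{eq:deriv:objfinal} as its starting point. Your derivation --- expand the square, discard $\mathbb{E}[s_t(X_t)^2]$ as a constant, use the identity $\mathbb{E}[s_t(X_t)f(X_t)] = \partial_t\mathbb{E}[f(X_t)]$ (with its special case $\mathbb{E}[s_t(X_t)]=0$ to kill the centring term $\mathbb{E}[T(X_t')]$), then integrate by parts in $t$ --- is exactly the standard time-score-matching argument underlying the cited result, so you have in effect reconstructed the proof the paper outsources. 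Two caveats. First, your fallback justification for the boundary term (``or else is independent of $\bolddelta$ and gets absorbed into the constant'') is wrong: the boundary term $\lambda_{t_0}(t)\,\mathbb{E}[\langle\bolddelta,T(X_t)\rangle]$ is linear in $\bolddelta$, so it cannot be absorbed into the constant; you genuinely need $\lambda_{t_0}$ to decay or vanish at the endpoints of integration. Second, this matters because of an inconsistency you implicitly smooth over: the theorem statement writes $\int_0^1$ (notation inherited from \citet{williams2024high}), while the objective \eqref{eq.tsm} in the paper integrates over all of $\mathbb{R}$ with a Gaussian $\lambda_{t_0}$; with finite limits and a Gaussian weight the boundary terms do \emph{not} vanish. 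So the identity holds either on $\mathbb{R}$ via Gaussian decay (the setting the paper actually uses, and your primary justification), or on $[0,1]$ under the additional assumption $\lambda_{t_0}(0)=\lambda_{t_0}(1)=0$; your proof covers both once the erroneous fallback clause is deleted.
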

Notice that \cref{eq:deriv:objfinal} is a quadratic minimisation problem. 
Let $B_t = T(X_t) - \mathbb{E}[T(X^{\prime}_t)]$. Firstly, we find the derivative of the quadratic term.   
   For each \(t\), since 
   \(\nabla_\bolddelta\,\bigl(\langle\bolddelta,B_t\rangle^{2}\bigr)
   = 2\,\langle \bolddelta,B_t\rangle\,B_t,\)
   it follows that
   \[
   \nabla_\bolddelta
    \int_t \lambda_{t_0}(t)\mathbb{E}\bigl[\,\langle\bolddelta,B_t\rangle^{2}\bigr]\,\mathrm{d}t
   \;=\;
   2\,\int_t \lambda_{t_0}(t)\,\mathbb{E}\bigl[\langle \bolddelta,B_t\rangle\,B_t\bigr]\,\mathrm{d}t.
   \]
   Since \(\mathbb{E}[B_t] = 0,\) one has
   \(\mathbb{E}\bigl[\langle \bolddelta,B_t\rangle\,B_t\bigr] 
     \;=\; \mathrm{Cov}[B_t]\,\bolddelta \;=\;\mathrm{Cov}[T(X_t)]\,\bolddelta.\)\\
 Hence this part becomes
   \[
   2\int_t \lambda_{t_0}(t)\,\mathrm{Cov}[T(X_t)]\,\bolddelta\;\mathrm{d}t.
   \]
Let us differentiate the linear-in-\(\bolddelta\) term.   
   The term
   \(\int_t 2\,\partial_{t}\lambda_{t_0}(t)\,\mathbb{E}\bigl[\langle \bolddelta,\,T(X_t)\rangle\bigr]\mathrm{d}t\)
   is linear in \(\bolddelta\).  Its gradient w.r.t.\ \(\bolddelta\) is simply
   \[
   2\int_t \partial_{t}\lambda_{t_0}(t)\,\mathbb{E}[T(X_t)]\,\mathrm{d}t.
   \]
Putting these together, the gradient of \(\mathcal{L}(\bolddelta)\) is
\[
\nabla_{\bolddelta}\,\mathcal{L}(\bolddelta)
~=~
2\,\int_t \lambda_{t_0}(t)\,\mathrm{Cov}[T(X_t)]\,\bolddelta\,\mathrm{d}t
\;+\;
2\,\int_t \partial_{t}\lambda_{t_0}(t)\,\mathbb{E}[T(X_t)]\,\mathrm{d}t.
\]
To find the minimiser, set this gradient to zero:
\[
0
~=~
2\,\int_t \lambda_{t_0}(t)\,\mathrm{Cov}[T(X_t)]\,\bolddelta\,\mathrm{d}t
\;+\;
2\,\int_t \partial_{t}\lambda_{t_0}(t)\,\mathbb{E}[T(X_t)]\,\mathrm{d}t.
\]
which can be rewritten as
\[
\Bigl(\,\int_t \lambda_{t_0}(t)\,\mathrm{Cov}[T(X_t)]\,\mathrm{d}t\Bigr)\;\bolddelta 
\;=\;
-\;\int_t \partial_{t}\lambda_{t_0}(t)\,\mathbb{E}[T(X_t)]\,\mathrm{d}t.
\]
If the matrix \(\int_0^1 \lambda_{t_0}(t)\,\mathrm{Cov}[T(X_t)]\,\mathrm{d}t\) is invertible—which is precisely the non-degeneracy (invertibility) of the Fisher information—then we can solve uniquely for \(\bolddelta\):

\[
\bolddelta_{t_0}
~=\;
\Bigl(\int_t \lambda_{t_0}(t)\,\mathrm{Cov}[T(X_t)]\,\mathrm{d}t\Bigr)^{-1} 
\Bigl(-\!\!\int_t \partial_{t}\lambda_{t_0}(t)\,\mathbb{E}[T(X_t)]\,\mathrm{d}t\Bigr).
\]

\section{Proof of Theorem \ref{thm.cf}}\label{app:kerneldelta}

Recall: 
\begin{align*}
    \bolddelta_{t_0}(\boldw) = - C^{-1} \int_{-\infty}^{\infty} \partial_t\lambda(t, t_0) \mathbbE \left[ T(g(Z, t; \boldw)) \right]\mathrm{d}t, ~~~
    C = \int_{-\infty}^\infty \lambda(t,t_0) \mathrm{Cov}[T(g(Z, t; \boldw))]
    \mathrm{d}t. 
\end{align*}

The first lemma states a few properties of the Gaussian kernel. 
\begin{lemma}
\label{lem.kernel}
    $\int_{-\infty}^{\infty} \partial_t \lambda(t, t_0) \mathrm{d} t = 0$. 
    $\int_{-\infty}^{\infty} (t - t_0)\,\partial_t \lambda_\sigma(t,t_0)\,\mathrm{d}t = -1$. 
\end{lemma}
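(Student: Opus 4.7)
The plan is to prove both identities by elementary calculus applied to the Gaussian weight $\lambda$: the first by the fundamental theorem of calculus, the second by integration by parts. Both proofs rely only on the fact that $\lambda(t,t_0)$ is Gaussian in $t-t_0$ and therefore decays faster than any polynomial at infinity, so there are no boundary subtleties.

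For the first identity, I would observe that $\lambda(t,t_0)$ itself is an antiderivative of $\partial_t \lambda(t,t_0)$. Since $\lambda(t,t_0) \to 0$ as $|t|\to\infty$, the fundamental theorem of calculus yields $\int_{-\infty}^{\infty}\partial_t\lambda(t,t_0)\,\mathrm{d}t = \lambda(\infty,t_0) - \lambda(-\infty,t_0) = 0$. For the second identity, I would integrate by parts with $u = t-t_0$ and $dv = \partial_t\lambda_\sigma(t,t_0)\,\mathrm{d}t$, giving $du = \mathrm{d}t$ and $v = \lambda_\sigma(t,t_0)$. The boundary contribution $[(t-t_0)\lambda_\sigma(t,t_0)]_{-\infty}^{\infty}$ vanishes because Gaussian decay dominates the linear factor, leaving $-\int\lambda_\sigma(t,t_0)\,\mathrm{d}t$.

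Neither step presents a genuine technical obstacle. The only point of care is bookkeeping the normalisation of $\lambda_\sigma$: to match the stated value $-1$, one must interpret $\lambda_\sigma$ as the normalised Gaussian density $\tfrac{1}{\sigma\sqrt{\pi}}\exp(-(t-t_0)^2/\sigma^2)$ so that $\int \lambda_\sigma\,\mathrm{d}t = 1$. Under the unnormalised definition used earlier in the main text the right-hand side of the second identity would instead be $-\sigma\sqrt{\pi}$, but this discrepancy is cosmetic: rescaling $\lambda$ uniformly rescales the objective $J(\bolddelta)$ and therefore leaves both the minimiser and the downstream closed-form identity of Theorem \ref{thm.cf} untouched. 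I would therefore adopt the normalised convention at the start of the proof and note this equivalence in passing.
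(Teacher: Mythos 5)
Your proof is correct and takes essentially the same route as the paper's: the fundamental theorem of calculus for the first identity and integration by parts (with the boundary term killed by Gaussian decay) for the second, under the normalised-Gaussian convention for $\lambda_\sigma$. Your observation that the main text's unnormalised $\lambda_{t_0}(t) = \exp(-(t-t_0)^2/\sigma^2)$ would give $-\sigma\sqrt{\pi}$ instead of $-1$, and that this rescaling cancels in the minimiser $\bolddelta_{t_0}$, is a legitimate point the paper glosses over by silently switching to the normalised density in the appendix.
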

\begin{proof}
    The first result is due to the Fundamental Theorem of Calculus and the fact that $\lambda(t, t_0) \to 0$ as $|t| \to \infty$. Now, we prove the second statement. 
    Since 
\begin{align*}
    \lambda_\sigma(t,t_0) 
\;=\; 
\frac{1}{\sqrt{2\pi\,\sigma^2}}\,
\exp\!\Bigl(\!-\frac{(t - t_0)^2}{2\,\sigma^2}\Bigr), 
\end{align*}
we have 
\begin{align*}
    \int_{-\infty}^{\infty} \lambda_\sigma(t,t_0)\,\mathrm{d}t
\,=\,
1,
\quad
\text{and}
\quad
\partial_t \lambda_\sigma(t,t_0) 
\,=\,
-\frac{(t - t_0)}{\sigma^2}\,\lambda_\sigma(t,t_0).
\end{align*}
We then have with integration by parts
\begin{equation}
\int_{-\infty}^{\infty} (t - t_0)\,\partial_t \lambda_\sigma(t,t_0)\,\mathrm{d}t
\,=\,
\Bigl.(t - t_0)\,\lambda_\sigma(t,t_0)\Bigr|_{-\infty}^{\infty}
\;-\;
\int_{-\infty}^{\infty} \lambda_\sigma(t,t_0)\,\mathrm{d}t
\,=\,
0 \;-\; 1
\,=\,
-1.
\end{equation}
the last equality is due to \(\lim_{|t| \to \infty}(t - t_0)\,\lambda_\sigma(t,t_0) = 0\).
\end{proof}

First, we inspect $\int_{-\infty}^{\infty} \partial_t \lambda(t, t_0) \mathbb{E}\bigl[T\bigl(g(Z,t;\bw)\bigr)\bigr] \mathrm{d}t. $
Using the Taylor expansion on $\mathbb{E}\bigl[T\bigl(g(Z,t;\bw)\bigr)\bigr]$ at \(t_0\), we obtain
\begin{equation*}
    \mathbb{E}\bigl[T\bigl(g(Z,t;\bw)\bigr)\bigr]
\,=\,
\mathbb{E}\bigl[T\bigl(X_{t_0}\bigr)\bigr]
\;+\;
(t - t_0)\,\mathbb{E}\bigl[\nabla T\bigl(X_{t_0}\bigr)^\top\,\bh\bigl(X_{t_0};\bw\bigr)\bigr].
\end{equation*}
Note that we don't have higher order terms as the drift model $g(Z, t; \boldw)$ is a linear function of $t$ by definition (see Definition \ref{ex.drift.model}). 

Thus, due to \cref{lem.kernel}, we have
\begin{align}
\label{eq.numerator}
    \int_{-\infty}^{\infty} 
  \partial_t \lambda_\sigma(t, t_0)\,
  \Bigl[
  \mathbb{E}\bigl(T\bigl(X_{t_0}\bigr)\bigr)
  \;+\;
  (t - t_0)\,\mathbb{E}\bigl(\nabla T\bigl(X_{t_0}\bigr)^\top \bh\bigl(X_{t_0};\bw\bigr)\bigr)
  \Bigr]
\,\mathrm{d}t
   & \;=\;
   -\,\mathbb{E}\bigl[\nabla T\bigl(X_{t_0}\bigr)^\top \bh\bigl(X_{t_0};\bw\bigr)\bigr]. 
\end{align}
Now we shift our focus on  
\[
C 
\,=\, 
\int_{-\infty}^\infty 
\lambda_\sigma(t,t_0)\,\mathrm{Cov}\bigl[T\bigl(g(Z, t; \bw)\bigr)\bigr]
\,\mathrm{d}t.
\]
As \(\sigma \to 0\), \(\lambda_\sigma(t,t_0)\) converges to \(\delta(t - t_0)\), so $\lim_{\sigma \to 0} \left(\int \lambda(t, t_0) f(t) \mathrm{d}t \right) = f(t_0)$. 
Hence 
\begin{align}
\label{eq.lim.C}
    \lim_{\sigma \to 0} C = \lim_{\sigma \to 0} 
\int_{-\infty}^\infty 
\lambda_\sigma(t,t_0)\,\mathrm{Cov}\bigl[T\bigl(g(Z, t; \bw)\bigr)\bigr]
\,\mathrm{d}t
\,=\,
\mathrm{Cov}\bigl[T(X_{t_0})\bigr].
\end{align}
Finally, combining \cref{eq.numerator} and \cref{eq.lim.C} 
 we have the desired result. 

\section{Proof of Theorem \ref{them.kernelNGD}}
\label{sec.proof.kernelNGD}
\begin{proof}
First, we introduce Welling's Woodbury identity \citep{welling2019notes}:
\[
(P^{-1} + B^T R^{-1} B)^{-1} B^T R^{-1} = P B^T (B P B^T + R)^{-1}.
\]

Recall, that we try to minimise 
    \cref{eq.simple.obj2}  
\begin{align}
    \|\nabla \mathcal{L}(\boldtheta_{t_0}) - \mathbb{E}[\nabla T(X_{t_0}) \nabla \boldh_w(X_{t_0})]\|_{\mathcal{F}_{t_0}^{-1}}^2 + \lambda \|w\|^2_\mathcal{H}. 
\end{align}

Expanding the first square, up to a constant that does not depend on $w$, we obtain 
\begin{align}
\label{eq.king.expanded}
    \mathbb{E}[\nabla T(X_{t_0})  \boldh_w(X_{t_0})]^\top\mathcal{F}_{t_0}^{-1}
\mathbb{E}[\nabla T(X_{t_0}) \boldh_w(X_{t_0})] - 2\nabla^\top \mathcal{L}(\boldtheta_{t_0}) \mathcal{F}_{t_0}^{-1}
\mathbb{E}[ T(X_{t_0}) \boldh_w(X_{t_0})] + \lambda \|w\|^2_\mathcal{H}. 
\end{align}
By definition, $\boldh_w(X_{t_0}) = \langle w, \nabla k(X_{t_0}, \cdot)\rangle$, we obtain a quadratic form with respect to $w$, 
\begin{align}
\label{eq.king.expanded}
    \langle w, \mathbb{E}[\nabla T(X_{t_0})  \nabla k(X_{t_0}, \cdot)]^\top\mathcal{F}_{t_0}^{-1}
\mathbb{E}[\nabla T(X_{t_0}) \nabla k(X_{t_0}, \cdot)]w \rangle  
- \langle w,  2\mathbb{E}[ T(X_{t_0}) \nabla k(X_{t_0}, \cdot)]^\top \mathcal{F}_{t_0}^{-1}
 \nabla \mathcal{L}(\boldtheta_{t_0})\rangle + \lambda \|w\|^2_\mathcal{H}, 
\end{align}
where we used \(
a^\top BCd = (BCd)^\top a = d^\top C^\top B^\top a = \langle d, C^\top B^\top a \rangle
\) on the second term and the fact that the inverse of Fisher Information Matrix $\mathcal{F}_{t_0}$ is a symmetric matrix. 
Differentiating both sides by $w$ and setting the gradient to zero, we obtain the following optimality condition of the least squares: 
\begin{align*}
        2\mathbb{E}[\nabla T(X_{t_0})  \nabla k(X_{t_0}, \cdot)]^\top\mathcal{F}_{t_0}^{-1}
\mathbb{E}[\nabla T(X_{t_0}) \nabla k(X_{t_0}, \cdot)]w 
- 2\mathbb{E}[ T(X_{t_0}) \nabla k(X_{t_0}, \cdot)]^\top \mathcal{F}_{t_0}^{-1}
 \nabla \mathcal{L}(\boldtheta_{t_0}) + 2\lambda w = 0. 
\end{align*}

Thus, the closed form solution of the optimal solution $w^*$ is 
\begin{align}
\left(\mathbb{E}[\nabla T(X_{t_0}) \nabla k(X_{t_0}, \cdot)]^\top\underbrace{\mathcal{F}_{t_0}^{-1}}_{R^{-1}}
\underbrace{\mathbb{E}[\nabla T(X_{t_0}) \nabla k(X_{t_0}, \cdot)]}_{B} 
+ \underbrace{\lambda \boldI}_{P^{-1}} \right)^{-1}\mathbb{E}[\nabla T(X_{t_0}) \nabla k(X_{t_0}, \cdot)]^\top\mathcal{F}_{t_0}^{-1}\nabla \mathcal{L}(\boldtheta_{t_0}). 
\end{align}
Applying Woodbury's identity, 
we get: 
\begin{align}
\mathbb{E}[\nabla T(X_{t_0}) \nabla k(X_{t_0}, \cdot)]^\top \left(
\lambda \mathcal{F}_{t_0} + 
\mathbb{E}[\nabla T(X_{t_0}) \nabla k(X_{t_0}, \cdot)] \mathbb{E}[\nabla T(X_{t_0}) \nabla k(X_{t_0}, \cdot)]^\top
\right)^{-1}\nabla \mathcal{L}(\boldtheta_{t_0}). 
\end{align}
Notice the product $\mathbb{E}[\nabla T(X_{t_0}) \nabla k(X_{t_0}, \cdot)] \mathbb{E}[\nabla T(X_{t_0}) \nabla k(X_{t_0}, \cdot)]^\top = \mathbb{E}[\nabla T(X_{t_0}) \nabla\nabla k(X_{t_0}, X_{t_0}') \nabla^\top T(X_{t_0}')]$, where $\nabla\nabla k(x, y) := \nabla_x\nabla_y k(x, y)$
we obtain the desired result.

\end{proof}

\section{Stein Exponential Family}
 In some applications (e.g. Variational Inference), we do not have samples from the target distribution, instead, we have an unnormalized density. Below, we introduce a special type of exponential family, named Stein exponential family, that allows us to approximate the natural gradient of $\mathrm{KL}[p, q]$.  

\begin{definition}
    A distribution belongs to the Stein exponential family of a target density $p$ and a test function $\boldf \in \mathbbR^d$ if and only if it belongs to the exponential family with a sufficient statistic that is $T= S_p\boldf$. Given a test function $\boldf = [f_1, f_2, \dots, f_b]$, a Stein operator $S_p\boldf$ of a probability density $p$ is a vector of functions defined as $S_p\boldf = [S_p f_1, S_p f_2, \dots, S_p f_b]$, where 
    \begin{align}
        S_pf_i =  \partial_i(\log p) \boldf + \partial_i \boldf. 
    \end{align}
\end{definition}

An important property of this special type of exponential family is that the expectation of the sufficient statistic is zero under the target distribution, i.e., 
$\mathbbE_{p}[T(\boldx)] = \boldzero$. More discussions on the Stein operator could be found in \citep{}. 
Therefore, if $q_\boldtheta$ is a Stein exponential family distribution, we can write the natural gradient of $\mathrm{KL}[p,q_\boldtheta]$ in \cref{eq.ngd.kl} as
\begin{align}
    \nabla_\boldtheta \mathcal{L}(\boldtheta) = \mathcal{F}^{-1}_t \mathbbE_{q_\boldtheta} \left[S_p f(\boldx)\right].  
\end{align}
Since the Stein operator only requires the gradient of the log density $\log p$ taken with respect to its input, this special design of exponential family enables us to compute the gradient for $\mathrm{KL}[p, q_t]$ using unnormalized $p$ only. 

This parametric model opens up applications such as variational inference and sampling: Given an unnormalized density $p$, we want to train a generative model to sample from $p$. We can design a  Stein exponential family so that we can approximately minimise $\mathrm{KL}[p, q_t]$ using kernel NGD by pushing particles towards the target distribution. 




\section{Reverse KL Wasserstein Gradient Flow and MMD Flow}
\label{sec.competitor.methods}
The WGF dynamics that minimise $\mathrm{KL}[q_t, p]$ move particles $X_t$ using a simple velocity field: 
\begin{align*}
    \frac{\mathrm{d}X_t}{\mathrm{d}t} = \nabla \left(\log p\right)(X_t) - \nabla \left(\log q_t\right)(X_t),  
\end{align*}
where the gradient of log density could be easily estimated via kernel density estimation and the MMD flow minimises $\mathrm{MMD}[Y, X_t]$ using the following velocity field: 
\begin{align*}
    \frac{\mathrm{d}X_t}{\mathrm{d}t} =  N \nabla \left(\frac{1}{N}\sum_{i = 1}^{n}\|X_t - X^{(i)}_t\| - \frac{1}{M}\sum_{j=1}^M\|X_t - Y^{(j)}\|\right). 
\end{align*}

\section{Experiment Setup in Section \ref{sec.exp}}
\label{app:expset}
We summarize the experiments' setup details in this section. For each experiment, we provide details of the dataset and pre-processing procedure, as well as the details of tuning parameters.

\subsection{Comparison with Reverse KL Wasserstein Gradient Flow and MMD Flow}

\subsubsection{Dataset and Pre-processing}
In this experiments, we let $p = 0.5\mathcal{N}(-2, \boldI) + 0.5\mathcal{N}(2, \boldI)$.
We draw 100 samples from $p$ as the target samples $Y$, 100 samples from $\mathcal{N}(0, \boldI)$ as the initial samples $X_0$. No further processing is required. 

\subsubsection{Parameter Tuning }
The main tuning parameter are kernel bandwidth and step sizes. 

For all methods that uses RBF kernel/basis, we set the bandwidth to be the median pairwise distance of all samples. 

For all methods, we use step size 1, as any larger learning rate would result in numerical instability for each method. 

For all methods, we run 100 particle updates. 

The performance metric MMD uses a Gaussian kernel and the bandwidth is set as the median of pairwise distances of all samples $Y$ and $X_t$.  

\subsection{Graphical Model Recovery}

\subsubsection{Dataset and Pre-processing}
In this experiment, we let $p$ be a 30-dimensional Gaussian graphical model, and draw 200 samples $Y \sim p$, 200 samples $X_0 \sim \mathcal{N}(\boldzero, \boldI)$. 
The graphical model $\boldTheta$ is generated as a random graph, with edge probability 0.05. For each non-zero off-diagonal entry, we set $\Theta_{i,j} = 0.3$. No further processing is required. 

\subsubsection{Parameter Tuning }
For all methods, we use the median of sample pairwise distances as the bandwidth. 

For all methods, we set the step size to be 1. 

Parameter tuning of Graphical Lasso is handled by \verb|sklearn| internally using 5-fold cross validation and the sparse graph in graphical model is obtained by truncating all values smaller than 0.1. Below are the Python code. 

\begin{verbatim}
# Fit with cross-validation to select alpha
model_cv = GraphicalLassoCV(alphas=10,  # number of alphas or list of alphas
                            cv=5,       # how many folds in cross-validation
                            max_iter=100, 
                            tol=1e-4)
model_cv = model_cv.fit(x1_test.cpu().numpy())
Theta_cv = model_cv.precision_ 
Theta_cv = Theta_cv > 1e-1
\end{verbatim}

\subsection{Experiment 2: Covariate Shift}
\subsubsection{Dataset and Pre-processing}
We validate our method on the dataset Office+Caltech\footnote{\url{https://github.com/jindongwang/transferlearning/blob/master/data/dataset.md\#office+caltech}}, which is a dataset for domain adaptation, consisting of Office 10 and Caltech 10 datasets. It contains the 10 overlapping categories between the Office dataset and Caltech256 dataset \citep{gong2012geodesic}.

The original features are extracted using a DECAF network, and are 4096 dimensional. 
We apply PCA on the source and target domain to reduce the dimension to 50 with Python code 
\begin{verbatim}
        from sklearn.decomposition import PCA
        pca = PCA(n_components=50)
        pca.fit(X)
        X = pca.transform(X)
        X = X / 100
\end{verbatim}

Due to memory space limit, we also randomly pick 200 samples from all target domains as $X_0$.

\subsubsection{Parameter Tuning }
For all methods, we set step size to 0.1. 

For ntKiNG, we run 100 steps due to reduce the computation cost. 

For WGF and MMD flow, we run 1000 steps. 

The source classifier is an RBF kernel Support Vector Machines with all hyper-parameters chosen by cross-validation with the following python code: 
\begin{verbatim}
    # Split the data into training and test sets (optional)
    X_train, X_test, y_train, y_test = train_test_split(x, y, 
                               test_size=0.3, random_state=42)

    # Define parameter grid
    param_grid = {
        'C': np.logspace(-3, 3, 5),
        'gamma': np.linspace(.2, 5, 5) * gamma,
        'kernel': ['rbf']
    }

    # Create a SVC classifier
    svc = SVC()

    # Initialize GridSearchCV
    grid_search = GridSearchCV(svc, param_grid, refit=True, verbose=2, cv=5)

    # Fit the model
    grid_search.fit(X_train, y_train)
\end{verbatim}
where gamma is the inverse of the median pairwise distances of all inputs. 

\subsection{Experiment 3: Denoising}
\label{sec.denoising.ebm}
The deep EBM is constructed using the following pyTorch code: 

\begin{verbatim}
# Define the MLP-based energy-based model
class EnergyBasedModel(nn.Module):
    def __init__(self, input_dim):
        super(EnergyBasedModel, self).__init__()
        self.network = nn.Sequential(
            nn.Linear(input_dim, 1024),
            nn.SiLU(),
            nn.Linear(1024, 1024),
            nn.SiLU(),
            nn.Linear(1024, 211),
            nn.SiLU(),
            nn.Linear(211, 1)
        )

    def forward(self, x, penultimate=False, flattened=False):
        x = x.view(x.size(0), -1)  # Flatten the input
        if not penultimate:
            return self.network(x)
        else:
            for i, layer in enumerate(self.network):
                x = layer(x)
                if i == len(self.network) - 2:
                    break
            return x
\end{verbatim}

The model is trained using 10000 samples of $Y$, with a batch size 777, and adam optimizer with a step size of 0.001. 
In the S-curve experiment, we add a Gaussian noise to the target sample $Y$ with a standard deviation 0.3, and in the MNIST experiment, we add a Gaussian noise to $Y$ with a standard deviation 0.2. 

The noise used in denoising score matching mathces the noise added to $Y$.

\end{document}